\newcommand{\Hc}{\mathcal{H}}
\newcommand{\Xc}{\mathcal{X}}
\newcommand{\Yc}{\mathcal{Y}}
\newcommand{\Zc}{\mathcal{Z}}
\newcommand{\rb}{{\mathbf r}}
\newcommand{\Dc}{{{\mathcal D}}}
\newcommand{\Pc}{{{\mathcal P}}}
\newcommand{\Fc}{{\mathcal F}}
\newcommand{\Rd}{{\mathbb R}}
\newcommand{\Md}{\mathbb{M}}
\newcommand{\Kd}{\mathbb{K}}
\newcommand{\Dd}{\mathbb{D}}
\title{Optimal Transport driven CycleGAN for Unsupervised Learning in  Inverse Problems
\thanks{Submitted on \today. \funding{This work was  supported by the National Research Foundation (NRF) of Korea  grant  NRF-2020R1A2B5B03001980.}}}
\author{
  Byeongsu Sim\thanks{Department of Mathematical Sciences, KAIST, Daejeon, Republic of Korea.}
    \and
   Gyutaek Oh\thanks{Department of Bio and Brain Engineering, KAIST, Daejeon, Republic of Korea.}
  \and
  Jeongsol Kim\footnotemark[3]
 \and
   Chanyong Jung\footnotemark[3]
\and
   Jong Chul Ye\thanks{Jong Chul Ye is with the Department of Bio and Brain Engineering and  Department of Mathematical Sciences, KAIST, Daejeon, Republic of Korea. \email{jong.ye@kaist.ac.kr} }
}
\begin{document}

\maketitle

\begin{abstract}
To improve the performance of  classical generative adversarial network (GAN), Wasserstein generative adversarial networks (W-GAN) was developed as a Kantorovich dual formulation of the optimal transport (OT) problem using Wasserstein-1 distance. However, it was not clear how cycleGAN-type generative models can be derived from the optimal transport theory.
Here we show that a novel  cycleGAN architecture can be derived as a Kantorovich dual OT formulation
if a penalized least square (PLS) cost with deep learning-based inverse path penalty is used as a transportation cost.
One of the most important advantages of this formulation is that
depending on the knowledge of the forward problem, distinct variations of cycleGAN architecture can be derived: for example,  one with two
pairs of generators and discriminators, and the other with only a single pair of generator and discriminator. Even for the two generator cases,
we show that the structural knowledge of the forward operator can lead to a simpler generator architecture which significantly
simplifies the neural network training. The new cycleGAN formulation, what we call the OT-cycleGAN, have been applied
 for various biomedical imaging problems,  such as
accelerated magnetic resonance imaging (MRI), super-resolution microscopy, and low-dose x-ray computed tomography (CT). Experimental results confirm the efficacy and flexibility of the theory.
\end{abstract}

\section{Introduction}

Inverse problems are ubiquitous  in   imaging \cite{bertero1998introduction}, computer vision \cite{forsyth2002computer},
and science \cite{chadan2012inverse}.
In inverse problems, a noisy measurement $y\in \Yc$ from an unobserved image $x\in \Xc$ is modeled by
\begin{eqnarray}
y&=&\Hc x + w \ , 
\end{eqnarray}
where $w$ is the  noise, and $\Hc:\Xc\mapsto \Yc$ is  the measurement operator. 
In inverse problems originating from physics, the measurement operator is usually represented by an integral equation \cite{engl1996regularization}:
\begin{eqnarray}\label{eq:g} 
\Hc x(\rb) &:=& \int_{\mathbb{R}^{d}}h(\rb,\rb')x(\rb')d\rb',
\quad \rb\in \Dc\subset \Rd^d,  
\quad d=2,3,
\label{Eq1}
\end{eqnarray}
where $h(\rb,\rb')$ is an integral kernel. 
Then, the inverse problem is formulated as an  estimation problem of the unknown  $x$ from the measurement $y$.
Unfortunately, the forward operator $\Hc$ often shrinks or sometimes eliminates some signals necessary to recover $x$ \cite{bertero1998introduction}. This nature of the operator makes the inverse problem ill-posed, that is, the recovery of $x$ is very sensitive to noise $w$ or there may be infinitely many possible $x$'s.

A classical strategy to mitigate the ill-posedness is  the  penalized least squares (PLS) approach:
\begin{eqnarray}\label{eq:problem}
\hat x =\arg\min_{x} c(x;y):= \|y -\Hc x\|^q+  R(x)
\end{eqnarray}
for  $q\geq 1$, where 
$R(x)$ is a regularization (or penalty) function ($l_1$, total variation (TV), etc.) \cite{chaudhuri2014blind,sarder2006deconvolution,mcnally1999three}.
In some inverse problems, the measurement operator $\Hc$ is not 
known,  so both the unknown operator $\Hc$  and the image $x$ should be estimated.

One of the limitations of the PLS formulation  is that the optimization problem should be solved again whenever new measurement comes.
On the other hand,  recent deep learning approaches for inverse problems  \cite{kang2017deep,hammernik2018learning} can quickly  produce
reconstruction results 
by inductively learning the inverse mapping between the input and matched label data in a supervised manner.
Therefore, the deep learning approaches are ideally suitable for inverse problems where
   fast reconstruction is necessary.
Unfortunately, in many applications,  matched label data are often difficult to obtain,
so the need for unsupervised learning is increasing.

So far, one of successful unsupervised learning methods for inverse problems is using the
 cycle consistent generative adversarial network (cycleGAN)\cite{kang2019cycle,lu2017conditional}.
 By imposing the cycle-consistency, artificial features due to mode collapsing behavior of
 generative adversarial networks (GAN) \cite{goodfellow2014generative})  can be mitigated
in the cycleGAN.
However, as the cycleGAN was originally proposed for image translation task, 
it is not clear how the imaging physics can be incorporated within the cycleGAN framework.
Therefore, one of the most important contributions of this paper is to show that a novel  cycleGAN architecture called OT-CycleGAN can be derived as a Kantorovich dual  formulation
if a penalized least square (PLS) cost composed of physics-based data fidelity  and deep learning-based inverse path penalty is used as a transportation cost.
In fact, this is a direct extension of Wasserstein-GAN (W-GAN) \cite{arjovsky2017wasserstein} that was derived as a dual formulation of OT problem with Wasserstein-1 distance.
Specifically, by additionally adding the data fidelity term, our OT-cycleGAN can be derived.
Our derivation is so general that 
various forms of cycleGAN architectures can be derived as special cases of OT-cycleGAN depending on the amount of knowledge of the forward physics.

This paper is structured as follows. We review
the mathematical preliminaries 
and discuss the limitation of naive application of cycleGAN in Section \ref{sec:review}.
In Section \ref{sec:main}, we first propose a novel PLS formulation that incorporates an inverse path described by a neural network as a regularization term.
Then, we derive a general form of OT-cycleGAN by analyzing the adopted OT problem associated with the novel PLS cost, 
and discuss its three distinct variations.
As proofs of concept,  Section~\ref{sec:exp}  applies three distinct cycleGAN architectures for unsupervised learning in three physical inverse problems: accelerated MRI,   super-resolution microscopy, and low-dose CT reconstruction.
The experimental results confirm that the proposed unsupervised learning approaches can successfully provide accurate inversion results
without any matched reference.

\section{Related Works}
\label{sec:review}

\subsection{Optimal Transport (OT)}

Optimal transport provides a mathematical means to compare two probability measures \cite{villani2008optimal,peyre2019computational}. 
 Formally, we say that  $T:\Xc \mapsto \Yc$ transports the probability measure $\mu \in P(\Xc)$ to another measure $\nu \in  P(\Yc)$, if
\begin{eqnarray}\label{eq:constraint}
\nu(B) = \mu\left(T^{-1}(B)\right),\quad \mbox{for all $\nu$-measurable sets $B$},
\end{eqnarray}
which is often simply represented by
$\nu = T_\#\mu,$
where $T_\#$ is called the push-forward operator.
Suppose there is a cost function $c:\Xc \times \Yc \rightarrow \Rd\cup\{\infty\}$ such that $c(x,y)$ represents the cost of moving one unit of mass from $x \in \Xc$ to $y \in \Yc$.
Monge's original OT problem \cite{villani2008optimal,peyre2019computational} is then to find a transport map $T$ that transports $\mu$ to $\nu$
at the minimum total transportation cost:
\begin{eqnarray}\label{eq:monge}
\min_T &~\Md(T):=\int_\Xc c(x,T(x))d\mu(x) ,&\quad
\mbox{subject to}  \quad \nu = T_\#\mu  \notag .
\end{eqnarray}
The nonlinear constraint $\nu = T_\# \mu$ is difficult to handle and sometimes leads void $T$ due to assignment of indivisible mass \cite{villani2008optimal,peyre2019computational}.
Kantorovich  relaxed the assumption to consider probabilistic
transport that allows  mass splitting from a source
toward several targets. 
Specifically, Kantorovich  introduced  a joint measure $\pi \in P(\Xc\times \Yc)$ such that
the original problem can be relaxed as
\begin{eqnarray}\label{eq:kantorovich}
\min_\pi &~\int_{\Xc\times \Yc} c(x,y)d\pi(x,y) \\
\mbox{subject to}\quad & \pi(A\times \Yc)=\mu(A),~~
 \pi(\Xc\times B)=\nu(B) \notag
\end{eqnarray}
for all measurable sets $A\in \Xc$ and $B\in \Yc$.
Here,  the last two constraints come from the observation that the total amount of mass removed from any measurable set  has to be equal to the marginals
 \cite{villani2008optimal,peyre2019computational}.

One of the most important advantages of Kantorovich formulation is  the dual formulation as stated in the following theorem:
\begin{theorem}[Kantorovich duality theorem]\cite[Theorem 5.10, p.57-p.59]{villani2008optimal}
Let $(\Xc, \mu)$ and $(\Yc, \nu)$ be two Polish probability spaces (separable complete metric space) and let $c: \Xc \times \Yc \rightarrow \mathbb{R}$ be a continuous cost function, such that $|c(x,y)| \le c_{\Xc}(x) + c_{\Yc}(y)$ for some $c_{\Xc} \in L^1(\mu)$ and $c_{\Yc} \in L^1(\nu)$,
where $L^1(\mu)$ denotes a Lebesgue space with integral function with the measure $\mu$.
Then, there is a duality:
\begin{align}
\min_{\pi \in \Pi (\mu, \nu)} \int_{\Xc \times \Yc} c(x,y) d\pi(x,y)  &= \sup_{ \varphi \in L^1(\mu)} \Big\{ \int_{\Xc} \varphi(x) d\mu(x) + \int_{\Yc} \varphi^c (y) d\nu(y) \Big\} \label{eq:Kdual1} \\
&=\sup_{ \psi \in L^1(\mu)} \Big\{ \int_{\Xc} \psi^c(x) d\mu(x) + \int_{\Yc} \psi (y) d\nu(y) \Big\} \label{eq:Kdual2}
\end{align}
where $$\Pi(\mu, \nu) :=\{\pi~|~ \pi(A\times \Yc)=\mu(A),~~
 \pi(\Xc\times B)=\nu(B) \},$$
and the above maximum is taken over the so-called {\em Kantorovich potentials} $\varphi$ and $\psi$, whose c-transforms
are defined as
\begin{eqnarray}\label{eq:ctransform}
\varphi^c (y) := \inf_{x} (c(x,y) - \varphi(x) )&,\quad& \psi^c (x) := \inf_{y} (c(x,y) - \psi(y) )
\end{eqnarray}
\end{theorem}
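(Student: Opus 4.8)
The plan is to establish the claimed equality by proving the two inequalities separately: the ``easy'' bound $\sup_{\mathrm{dual}} \le \min_{\mathrm{primal}}$, and then the reverse ``no duality gap'' bound, which carries all the difficulty, after which the c-transform representations in \eqref{eq:Kdual1}--\eqref{eq:Kdual2} follow by a pointwise optimization.

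First I would prove weak duality. Fix any admissible pair of continuous integrable potentials $\varphi \in L^1(\mu)$, $\psi \in L^1(\nu)$ satisfying $\varphi(x) + \psi(y) \le c(x,y)$ for all $(x,y)$, together with any $\pi \in \Pi(\mu,\nu)$. Integrating this pointwise inequality against $\pi$ and using the marginal constraints $\pi(A\times\Yc) = \mu(A)$ and $\pi(\Xc\times B) = \nu(B)$ to rewrite $\int \varphi(x)\, d\pi = \int \varphi\, d\mu$ and $\int \psi(y)\, d\pi = \int \psi\, d\nu$ yields
\[ \int_\Xc \varphi\, d\mu + \int_\Yc \psi\, d\nu = \int_{\Xc\times\Yc} \big(\varphi(x)+\psi(y)\big)\, d\pi(x,y) \le \int_{\Xc\times\Yc} c\, d\pi. \]
Taking the supremum over potentials on the left and the infimum over $\pi$ on the right gives $\sup \le \min$. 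At this point I would also record that for fixed $\varphi$ the pointwise-largest admissible partner is precisely the c-transform $\varphi^c(y) = \inf_x(c(x,y)-\varphi(x))$, since any admissible $\psi$ obeys $\psi(y) \le c(x,y)-\varphi(x)$ for all $x$ and hence $\psi \le \varphi^c$; this is exactly what collapses the constrained dual into the unconstrained single-potential forms \eqref{eq:Kdual1}--\eqref{eq:Kdual2}, the two being equivalent by symmetry of the c-transform.

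The core of the theorem is strong duality, and here I would invoke convex duality in the guise of the Fenchel--Rockafellar theorem on the Banach space $E = C_b(\Xc\times\Yc)$ of bounded continuous functions. I would encode the constraint and the marginal functional as two convex functionals $\Theta, \Xi : E \to \Rd \cup \{+\infty\}$, one an indicator-type functional penalizing violation of $u \ge -c$, the other finite only on separated functions $u(x,y) = \varphi(x)+\psi(y)$ where it returns $\int \varphi\, d\mu + \int \psi\, d\nu$. Computing the Legendre transforms $\Theta^*$ and $\Xi^*$ identifies the resulting dual problem with the primal Kantorovich problem over joint measures, provided the minimizing continuous linear functionals can be represented by nonnegative measures with the prescribed marginals. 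The qualification hypothesis of Fenchel--Rockafellar, namely that some admissible $u$ renders $\Theta$ finite and continuous while $\Xi$ is finite, is secured by the growth bound $|c(x,y)| \le c_\Xc(x)+c_\Yc(y)$, which keeps both sides finite and the problem nondegenerate.

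The hard part will be passing from abstract dual elements to genuine probability measures, because $\Xc$ and $\Yc$ are only Polish and not compact, so the dual of $C_b$ is strictly larger than the space of finite signed measures. I would handle this in two stages. First I would prove the duality when $\Xc,\Yc$ are compact and $c$ is continuous and bounded, where the Riesz representation theorem cleanly identifies $C_b(\Xc\times\Yc)^*$ with Radon measures and the Fenchel--Rockafellar output is immediately a joint measure with the correct marginals. Then I would remove compactness by approximation: since every Borel probability measure on a Polish space is tight, I would exhaust $\Xc$ and $\Yc$ by compact sets, truncate the cost, apply the compact case, and pass to the limit. The integrability bound $c_\Xc \in L^1(\mu)$, $c_\Yc \in L^1(\nu)$ is exactly what forces the truncation errors to vanish and keeps the limiting potentials in $L^1$, thereby closing the gap between the two inequalities and delivering the stated equality together with its c-transform representations.
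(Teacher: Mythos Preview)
The paper does not supply a proof of this theorem at all: it is quoted verbatim as a known result, with an explicit citation to \cite[Theorem 5.10, p.57--p.59]{villani2008optimal}, and the text moves on immediately to consequences. So there is no ``paper's own proof'' to compare against; the authors treat Kantorovich duality as background.

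Your outline is a reasonable sketch of one of the standard routes (weak duality by integration of the constraint, reduction to the $c$-transform form by pointwise maximization of the partner potential, then strong duality via Fenchel--Rockafellar on $C_b(\Xc\times\Yc)$, first in the compact case and then by tightness approximation). Two remarks. First, the proof in the cited reference \cite{villani2008optimal} does \emph{not} go through Fenchel--Rockafellar; Villani's argument for Theorem~5.10 is a more hands-on construction built around double $c$-transforms and a careful limiting procedure, so your route, while legitimate, is closer in spirit to his earlier book \emph{Topics in Optimal Transportation}. Second, the step you flag as ``the hard part''---extracting a genuine measure from the abstract dual element when $\Xc,\Yc$ are non-compact Polish---is exactly where your sketch is thinnest: you would need to spell out how tightness of $\mu,\nu$ transfers to tightness of near-optimal couplings (Prokhorov), and why the truncated potentials stay admissible and converge in $L^1$ under the growth bound. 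None of this is wrong, but in a full write-up that passage would carry most of the work.
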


In the Kantorovich dual formulation, finding the proper space of $\varphi$ and the computation of the $c$-transform $\varphi^c$ are important.
In particular, when 
 $c(x,y)=\|x-y\|$, we can reduce possible candidate of $\varphi$ to 1-Lipschitz functions so that it can simplify $\varphi^c$ to $-\varphi$ \cite{villani2008optimal}:
\[
\min_{\pi \in \Pi (\mu, \nu)} \int_{\Xc \times \Xc} ||x-y|| d\pi(x,y) = \sup_{ \varphi \in \text{Lip}_1(\Xc)} \Big\{ \int_{\Xc} \varphi(x) d\mu(x) - \int_{\Xc} \varphi (y) d\nu(y) \Big\},
\]
where Lip${}_1(\Xc) = \{\varphi \in L^1(\mu) : |\varphi(x)-\varphi(y)| \le ||x-y||\}$.

\subsection{PLS with Deep Learning Prior}
\label{sec:PLS}

Recently,  PLS frameworks  using  a deep learning prior have been
extensively studied \cite{zhang2017learningdenoiser,aggarwal2018modl}  thanks to their similarities to the classical
regularization theory. The main idea of these approaches is to utilize a pre-trained neural
network to stabilize the inverse solution.
For example, in model based deep learning architecture (MoDL) \cite{aggarwal2018modl}, the problem is formulated as
\begin{eqnarray}\label{eq:costPLS}
\min_xc(x;y,\Theta,\Hc)=\|y-\Hc x\|^2+ \lambda\|x- Q_\Theta(x) \|^2
\end{eqnarray}
for some regularization parameter $\lambda>0$, where $Q_\Theta(x)$ is a pre-trained denoising CNN with the network parameter $\Theta$ and the input $x$.
In \eqref{eq:costPLS}, the regularization term penalizes the difference between $x$  and the ``denoised'' version of $x$ so that
the regularization term gives high penalty  when $x$ is contaminated  with reconstruction artifacts.
An alternating minimization approach for \eqref{eq:costPLS} was proposed by the authors in \cite{aggarwal2018modl}, which can be stated
as follows:
\begin{eqnarray}
x_{n+1} = \arg\min_x \|y-\Hc x\|^2+ \lambda\|  x-z_n\|^2,&\quad \mbox{where}~z_{n} = Q_\Theta(x_{n}) . 
\label{eq:denoiser}
\end{eqnarray}

Another type of inversion approach using a deep learning prior is the so-called deep image prior (DIP) \cite{ulyanov2018deep}.
Rather than using an explicit prior,  
the deep neural network architecture itself is used as a regularization by restricting the solution space:
\begin{eqnarray}\label{eq:DIP}
\min_\Theta c(\Theta;y,\Hc)=\|y-\Hc Q_\Theta(z)\|^2
\end{eqnarray}
where $z$ is a random vector. Then,  the final solution becomes $x=Q_{\Theta^*}(z)$ with $\Theta^*$ being the estimated network parameters.

Yet other approaches are the generative models \cite{van2018compressed,bora2017compressed,wu2019deep} which use \eqref{eq:DIP} as the cost function.
However, they either estimate the random variable
$z$ from \eqref{eq:DIP} by fixing $\Theta$, or attempt to estimate both the random $z$ and the network weight $\Theta$.

\subsection{CycleGAN}
\label{sec:limitation}
CycleGAN was introduced in unmatched image-to-image translation task  in  computer vision \cite{zhu2017unpaired}.
The main idea is to impose a cycle-consistency  to mitigate the mode-collapsing
behavior in the generative adversarial network (GAN) \cite{goodfellow2014generative}.
Specifically, for the image translation problem between two domains $\Xc$ and $\Yc$,
 CycleGAN has two generators $G_\Theta:\Yc\mapsto\Xc$ and $F_\Upsilon:\Xc\mapsto \Yc$ 
and two discriminators $\varphi_\Phi$ and $\psi_\Xi$ 
 as shown in Fig.~\ref{fig:cycleGAN}(a),  which
are implemented using neural networks with the weight parameters $\Theta,\Upsilon,\Phi$ and $\Xi$, respectively \cite{zhu2017unpaired}.
Then, the unknown generator parameters $\Theta$ and $\Upsilon$ are estimated by solving the following min-max problem \cite{zhu2017unpaired}:
\begin{align}
\min_{\Theta,\Upsilon}\max_{\Phi,\Xi}\ell_{cycleGAN}(\Theta,\Upsilon;\Phi,\Xi) 
\end{align} 
where the loss is defined by
\begin{align}
\ell_{cycleGAN}(\Theta,\Upsilon;\Phi,\Xi) = \ell_{cycle}(\Theta,\Upsilon)+\ell_{Disc}(\Theta,\Upsilon;\Phi,\Xi)
\end{align} 
Here,  $\ell_{cycle}(\Theta,\Upsilon)$ is the cycle-consistency term given by
\begin{align}
\ell_{cycle}(\Theta,\Upsilon)  := \int_{\Xc} \|x- G_\Theta(F_\Upsilon(x)) \|  d\mu(x) +\int_{\Yc} \|y-F_\Upsilon(G_\Theta(y))\|   d\nu(y)
\end{align}
and 
$\ell_{Disc}(\Theta,\Upsilon;\Phi,\Xi)$ is a discriminator loss:
\begin{eqnarray}
\ell_{Disc}(\Theta,\Hc;\Phi,\Xi) 
&=& \int_{\Xc} \log (\varphi_\Phi(x))d\mu(x) + \int_\Yc \log(1-\varphi_\Phi(G_\Theta(y))) d\nu(y)  \\ 
&& + \int_{\Yc} \log(\psi_\Xi(y))d\nu(y) + \int_\Xc \log(1-\psi_\Xi(F_\Upsilon(x)))d\mu(x) \notag
\end{eqnarray}
where $\mu$ and $\nu$ are probability measures for $\Xc$ and $\Yc$, respectively.
The loss function is summarized in Table~\ref{tbl:algorithm}(a).
In cycleGAN,  $\varphi_\Phi$ is the discriminator that tries to differentiate true $x\in \Xc$ and the fake one generated by $G_\Theta(y)$, and
$\psi_\Xi$ is the discriminator that tells the true $y\in \Yc$ from the generated one by $F_\Upsilon(x)$.
Thanks to the competition between the discriminators and generators, more realistic images can be generated.
Moreover, the cycle-consistency loss imposes the one-to-one mapping condition between the two domains, so that the mode collapsing
problem in the conventional GAN \cite{goodfellow2014generative,zhu2017unpaired} can be reduced.

Although cycleGAN was originally proposed for style transfer in computer vision tasks,  
this framework was successfully employed for some inverse problems in medical imaging area such as low-dose CT \cite{kang2019cycle}.
However, blind application of cycleGAN for inverse problems is still problematic due to several issues.
First, the use of deep neural networks for two pairs of  generators and discriminators makes the cycleGAN training often difficult, since the four deep neural
networks should be trained simultaneously. For example, if one of them is not trained properly, all the other networks do not converge correctly.
Another  fundamental issue of using cycleGAN for inverse problems is that although the forward operator in \eqref{eq:g} is often
available in inverse problems, it is not clear how to incorporate this information during the cycleGAN training.
The main goal of this paper is  therefore to provide a systematic framework to address these issues.

\begin{figure}[!h]
	\center{
		\includegraphics[width=1\textwidth]{./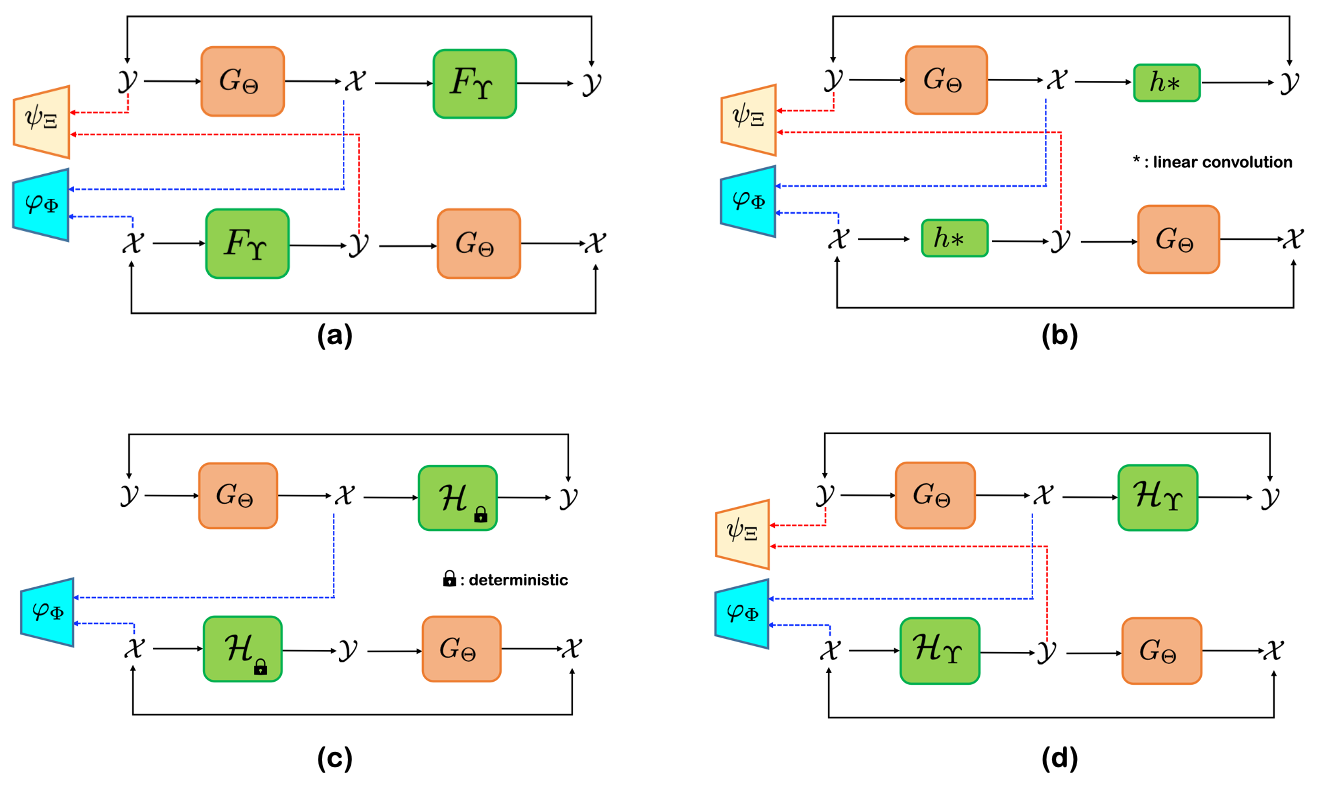}
	}\vspace{-0.3cm}
	\caption{Comparison of CycleGAN and OT-CycleGAN architectures.
		 (a) Conventional CycleGAN architecture. There are two generators ($G_\Theta$, $F_\Upsilon$) and two discriminators ($\varphi_\Phi$ and $\psi_\Xi$), which should be implemented using deep neural networks. (b) OT-CycleGAN where the forward physics can be described by linear convolution
		 operation $h\ast$.  In this case,  one deep neural network generator can be replaced by a linear layer.  
		 (c) OT-cycleGAN where the forward physics $\Hc$ is known.  In this case, only a single pair of generator $G_\Theta$ and discriminator  $\varphi_\Phi$ are needed.
		 (d) OT-cycleGAN where the forward physics is unknown. In this case, we need two pairs of generators  ($G_\Theta$, $F_\Upsilon$) and discriminators ($\varphi_\Phi$ and $\psi_\Xi$) implemented by deep neural networks, similar to the conventional cycleGAN.
		 }
	\label{fig:cycleGAN}
\end{figure}

\section{Optimal Transport Driven CycleGAN (OT-CycleGAN)}
\label{sec:main}

\subsection{Derivation}

To address the aforementioned limitation of cycleGAN for inverse problems,
here we explain our novel optimal transport driven cycleGAN (OT-cycleGAN) which incorporates
the forward physics  using optimal transport theory.
We show that our OT-cycleGAN is so general that
depending on the amount of prior knowledge of the forward mapping, various forms of cycleGAN architecture
can be derived as shown in Figs.~\ref{fig:cycleGAN}(b)-(d).
 In the following, we will show how the optimal transport theory
and  the forward
model can be exploited systematically in deriving a unified framework.

Specifically,  our OT-cycleGAN starts with a new PLS cost function with a novel deep learning prior as follows:
\begin{eqnarray}\label{eq:cost}
c(x,y;\Theta,\Hc)=\|y-\Hc x\|^q+ \lambda\| G_\Theta(y) - x\|^p 
\end{eqnarray}
with  $p,q\geq 1$,
$G_\Theta$ is a generative network with parameter $\Theta$ and $\Hc$ is a measurement system that should be estimated.
In \eqref{eq:cost}, $\lambda$ is the regularization term. 
For simplicity, in the rest of  the paper,
we assume $\lambda=1$.

Recall that the classical PLS approach \eqref{eq:problem} 
introduces a penalty term for choosing the solution $x$ based on the prior distribution of the data (see Fig.~\ref{fig:concept}(a)). 
This reduces the feasible sets, which may mitigate the ill-posedness of inverse problems.
On the other hand, the main motivation of the new PLS cost in \eqref{eq:cost} is to provide  another way to resolve the ambiguity of the feasible solutions. 
 Specifically, if we define a {single-valued} function $G_\Theta(y)$ and
impose the constraint $x=G_{\Theta^*}(y)$ with the learned parameter $\Theta^*$, many of the feasible solutions for $y=\Hc x$ can be pruned out as shown in Fig.~\ref{fig:concept}(b).
Therefore, this  provides another way of mitigating the ill-posedness.
It is also important to emphasize the difference between the regularization term in MoDL in \eqref{eq:costPLS}
and our regularization term in \eqref{eq:cost}.  Although they look somewhat similar,
there exists a fundamental differences. In MoDL, the  neural network $Q_\Theta(x)$ is a {\em pre-trained} denoising network which gets reconstruction
$x$ as input. Therefore, $x-Q_\Theta(x)$ can be assumed as noises and the regularization term penalizes the noise term.
On the other hand, our neural network  $G_\Theta$ gets the measurement $y$ as the input to estimate unknown $x$ as our neural network output.
Therefore, it penalizes the inconsistency in the inverse path.

Another important difference is that our neural network $G_\Theta$ is not pretrained, and the cost function in \eqref{eq:cost} is used to train the neural network $G_\Theta$.
Accordingly, if the PLS cost  becomes zero after the training,   we have
\begin{eqnarray}\label{eq:primal}
 y=\Hc x,\quad x=G_{\Theta^*}(y)
\end{eqnarray}
where $\Theta^*$ is the trained neural network weight.
Thus, $G_{\Theta^*}$ can be an inverse of the forward operator $\Hc$, which is the ultimate goal
in every inverse problem. In fact, this is the main motivation
of using the new penalty function in our formulation.
Therefore, the remaining question is how to estimate the parameter $\Theta^*$, where the optimal transport
plays an important role.

\begin{figure*}[b!]
\centering\includegraphics[width=0.7\textwidth]{./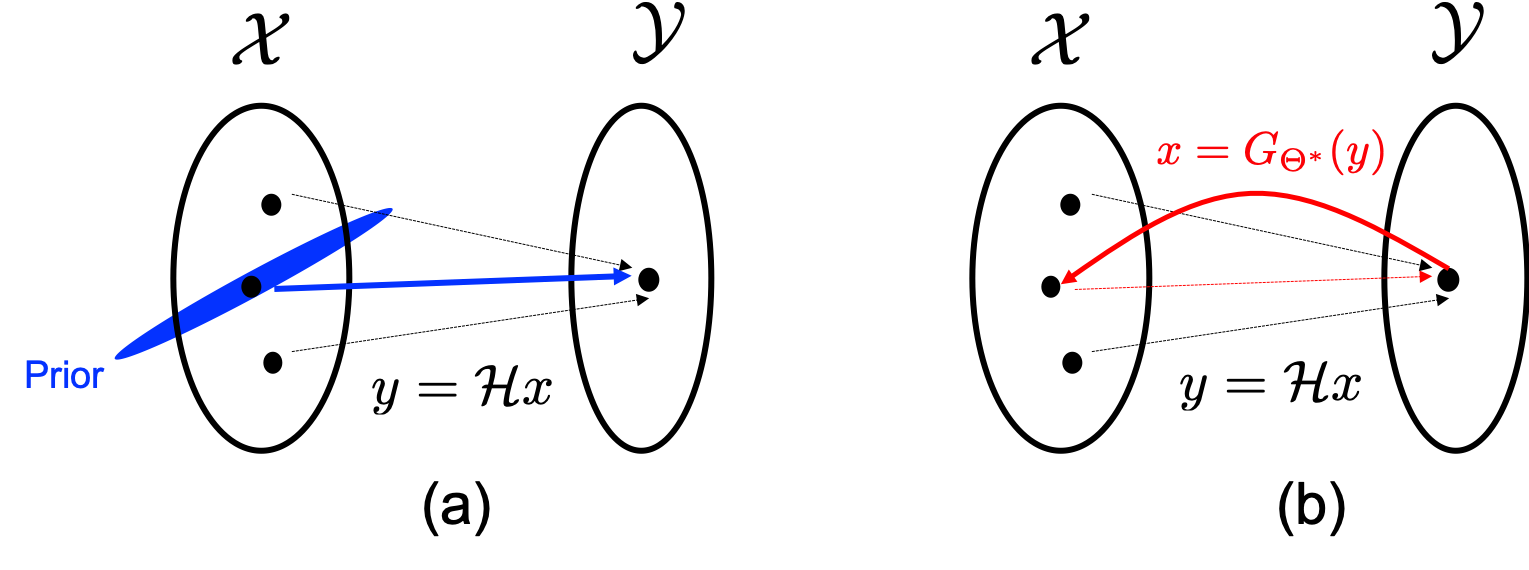}
\caption{Two strategies for resolving ambiguities in the feasible solutions in an ill-posed inverse problem. (a) Classical PLS approach using a close form prior distribution,
and (b) our PLS approach using an inverse mapping to define a prior. }
\label{fig:concept}
\end{figure*}

Specifically, our goal is to estimate the parameter $\Theta^*$ under the  unsupervised learning scenario where both $x$ and $y$ are unpaired.
Since we do not have exact match between $\Xc$ and $\Yc$ due to the unsupervised training, rather than attempting to find $x$ given $y$,
we estimate joint distribution, which can considers all combinations of $x\in \Xc$ and $y \in \Yc$.
In this scenario, $x$ and $y$ can be modeled as random samples from the probability distributions on $\Xc$ and $\Yc$, respectively,
but their joint probability measure $\pi(x,y)$, which we attend, is unknown.
In this regard, the problem is to find optimal probability measure which minimizes the average PLS cost among joint measures whose marginal distributions follow the samples $x's$ and $y's$ distributions.
This is equivalent to the optimal transport problem  \cite{villani2008optimal,peyre2019computational} which minimizes
 the average transport cost, where the average
cost can be computed by
\begin{eqnarray}\label{eq:org}
\Kd(\Theta,\Hc):= \min_{\pi} \int_{\Xc\times \Yc} c(x,y;\Theta,\Hc)d\pi(x, y) 
\end{eqnarray}
where the minimum is taken over all joint distributions whose marginal distributions with respect to $X$ and $Y$ are $\mu$ and $\nu$, respectively.
Then, the unknown parameters for inverse problem, for example, $\Hc$ and $\Theta$ for the forward and inverse operator, respectively,
can be found by minimizing $\Kd(\Theta,\Hc)$ with respect to these parameters.

Eq.~\eqref{eq:org} is the {\em primal} form of the optimal transport problem. 
However, the primal formulation is complicated to solve directly due to several technical issues.
For example,
	suppose $\Kd(\Theta,\Hc)$ is achieved when the optimal  joint distribution is $\pi^*$.
	Now, if we attempt to minimize $\Kd$ with respect to $\Theta^*$, then the cost also varies, so the optimal $\pi$ should be modified.
	Therefore, we should seek alternating optimization of $\pi$ and $\Theta$. 
		Unfortunately, since $\pi$ is a distribution on a high dimensional space, 
		nonparametric modeling of the distribution for such alternating optimization
		is a very difficult problem.

To address this problem, our goal is to obtain its {\em dual} formulation
using the Kantorovich dual formula, where the estimation of the joint distribution is not necessary.
Although the additive form of the PLS cost in \eqref{eq:cost} makes it difficult
to obtain the explicit dual formula,
 Proposition~\ref{prp:bound} shows that there exists interesting bounds that can be used to obtain the dual formulation.

\begin{proposition}\label{prp:bound}
Consider the  transportation cost $\Kd(\Theta,\Hc)$ of the primal OT problem using the PLS cost \eqref{eq:cost} with  $p=q=1$:
\begin{align}\label{eq:Kd}
\Kd(\Theta,\Hc)=\min_\pi \int_{\Xc\times \Yc} \|y-\Hc x\|+\| G_\Theta(y) - x\|d\pi(x,y) \ .
\end{align}
Now let us define
\begin{eqnarray}\label{eq:Dd}
\Dd(\Theta,\Hc):=  \frac{1}{2} \ell_{cycle}(\Theta,\Hc) +\ell_{OT'}(\Theta,\Hc) 
\end{eqnarray}
where
\begin{align}
\ell_{cycle}(\Theta,\Hc)  =& \frac{1}{2}\left\{\int_{\Xc} \|x- G_\Theta(\Hc x) \|  d\mu(x) +\int_{\Yc} \|y-\Hc G_\Theta(y)\|   d\nu(y)\right\} \label{eq:cycle}
\end{align}
\begin{align}\label{eq:OT'}
\ell_{OT'}(\Theta,\Hc) =& 
 \frac{1}{2}\left\{\max_{\varphi}\int_\Xc \varphi(x)  d\mu(x) - \int_\Yc \varphi(G_\Theta(y))d\nu(y) \right.  \\
 & \left. + \max_{\psi}\int_{\Yc} \psi(y)  d\nu(y) - \int_\Xc \psi(\Hc x)  d\mu(x)\right\}  \notag
\end{align}
where $\varphi,\psi$ are 1-Lipschitz functions.
Then,  the average transportation cost $\Kd(\Theta,\Hc)$  in \eqref{eq:Kd}  can be approximated
by $\Dd(\Theta,\Hc)$ with the following error bound:
\begin{eqnarray}
|\Kd(\Theta,\Hc)-\Dd(\Theta,\Hc)| &\leq \frac{1}{2}\ell_{cycle}(\Theta,\Hc).
\end{eqnarray}
Moreover, the error bound $\frac{1}{2} \ell_{cycle}(\Theta,\Hc)$ becomes zero
	when $G_\Theta$ is an exact inverse of $\Hc$, i.e.
	$\Hc G_{\Theta}(y)=y$ and $G_{\Theta}\Hc x = x$ for all $x\in \Xc,y\in \Yc$.
\end{proposition}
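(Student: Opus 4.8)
The plan is to exploit the additive structure of the cost, $c(x,y)=\|y-\Hc x\|+\|G_\Theta(y)-x\|=:c_1(x,y)+c_2(x,y)$, and to recognize the two summands as two decoupled Wasserstein-$1$ transport problems whose dual potentials are precisely the $\psi$ and $\varphi$ appearing in $\ell_{OT'}$. Because $c_1$ depends on $x$ only through $\Hc x$, the change of variables $(x,y)\mapsto(\Hc x,y)$ together with the gluing lemma gives $\min_{\pi\in\Pi(\mu,\nu)}\int c_1\,d\pi=W_1(\Hc_\#\mu,\nu)=:\Kd_1$; symmetrically, since $c_2$ depends on $y$ only through $G_\Theta(y)$, one obtains $\min_{\pi\in\Pi(\mu,\nu)}\int c_2\,d\pi=W_1(\mu,(G_\Theta)_\#\nu)=:\Kd_2$. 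Invoking the $c(x,y)=\|x-y\|$ specialization of the Kantorovich duality theorem quoted above, which collapses the $c$-transform to $-\varphi$ over $\mathrm{Lip}_1$, identifies $\Kd_2$ with $\max_\varphi\{\int_\Xc\varphi\,d\mu-\int_\Yc\varphi(G_\Theta(y))\,d\nu\}$ and $\Kd_1$ with $\max_\psi\{\int_\Yc\psi\,d\nu-\int_\Xc\psi(\Hc x)\,d\mu\}$, so that $\ell_{OT'}=\tfrac12(\Kd_1+\Kd_2)$.

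For the lower estimate I would note that for every admissible $\pi$, $\int c\,d\pi=\int c_1\,d\pi+\int c_2\,d\pi\ge\Kd_1+\Kd_2$, since each summand already dominates the optimum of its own problem; minimizing over $\pi$ yields $\Kd\ge\Kd_1+\Kd_2\ge\ell_{OT'}$, which is the lower half of the two-sided estimate $\ell_{OT'}\le\Kd\le\ell_{cycle}+\ell_{OT'}$ that is equivalent to the claim.

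The upper estimate is the heart of the argument. Let $\pi_1^\ast,\pi_2^\ast\in\Pi(\mu,\nu)$ be optimal for $\Kd_1,\Kd_2$; by convexity of $\Pi(\mu,\nu)$ the average $\pi:=\tfrac12(\pi_1^\ast+\pi_2^\ast)$ is admissible and $\int c\,d\pi=\tfrac12(\Kd_1+\Kd_2)+\tfrac12\big(\int c_2\,d\pi_1^\ast+\int c_1\,d\pi_2^\ast\big)$. The cycle-consistency functional enters through the two residual cross-terms: the triangle inequalities $\|G_\Theta(y)-x\|\le\|x-G_\Theta(\Hc x)\|+\|G_\Theta(\Hc x)-G_\Theta(y)\|$ and $\|y-\Hc x\|\le\|y-\Hc G_\Theta(y)\|+\|\Hc G_\Theta(y)-\Hc x\|$ split each cross-term into a piece that integrates exactly to a half of $\ell_{cycle}$ and an off-diagonal remainder $\int\|G_\Theta(\Hc x)-G_\Theta(y)\|\,d\pi_1^\ast$, resp.\ $\int\|\Hc G_\Theta(y)-\Hc x\|\,d\pi_2^\ast$.

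Controlling these off-diagonal remainders is the step I expect to be the main obstacle, and it is where structural hypotheses on $\Hc$ and $G_\Theta$ become necessary: one needs that neither map amplifies the discrepancies that the optimal sub-couplings leave between $\Hc x$ and $y$, resp.\ between $x$ and $G_\Theta(y)$ — for instance nonexpansiveness, or the explicit concentration of $\pi_1^\ast,\pi_2^\ast$ on the graphs of the optimal transport maps — so that the remainders are absorbed into the budget $\tfrac12\ell_{cycle}$ rather than contributing a spurious term of order $\Kd_1+\Kd_2$. Once this is secured, $\Kd\le\ell_{cycle}+\ell_{OT'}$ follows and the two-sided bound is complete. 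The final assertion requires no separate argument: if $\Hc G_\Theta(y)=y$ and $G_\Theta\Hc x=x$ hold pointwise, then every integrand defining $\ell_{cycle}$ equals $\|x-x\|$ or $\|y-y\|$, so $\ell_{cycle}=0$ and the error bound vanishes identically.
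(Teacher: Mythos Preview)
Your decomposition $c=c_1+c_2$ and the identification $\Kd_i=W_1$ are correct, and your lower bound $\Kd\ge\Kd_1+\Kd_2=2\ell_{OT'}\ge\ell_{OT'}$ is fine (indeed sharper than what the proposition needs). The paper obtains the same lower bound by the one--line 1--Lipschitz estimate $-\varphi(G_\Theta(y))\le\|G_\Theta(y)-x\|-\varphi(x)\le c(x,y)-\varphi(x)$ integrated against any admissible $\pi$.

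The upper bound is where your route genuinely fails, and you have correctly located the failure. The off--diagonal remainders $\int\|G_\Theta(\Hc x)-G_\Theta(y)\|\,d\pi_1^\ast$ and $\int\|\Hc G_\Theta(y)-\Hc x\|\,d\pi_2^\ast$ cannot be absorbed into $\tfrac12\ell_{cycle}$ without nonexpansiveness of $G_\Theta$ and $\Hc$; and even granting that, your test coupling $\tfrac12(\pi_1^\ast+\pi_2^\ast)$ yields at best $\Kd\le 2\ell_{OT'}+\ell_{cycle}$, not the required $\Kd\le\ell_{OT'}+\ell_{cycle}$. So this primal test--coupling strategy cannot reach the stated bound.

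The paper avoids the obstacle entirely by staying in the dual. It writes $\Kd$ twice---once with potential $\varphi$ and once with $\psi$---and averages:
\[
\Kd=\tfrac12\Bigl\{\max_\varphi\Bigl[\int_\Xc\varphi\,d\mu+\int_\Yc\varphi^c\,d\nu\Bigr]+\max_\psi\Bigl[\int_\Xc\psi^c\,d\mu+\int_\Yc\psi\,d\nu\Bigr]\Bigr\},
\]
with the $c$--transforms taken for the \emph{full} cost $c$. Then, instead of building a test coupling, it plugs a \emph{test point} into each $c$--transform infimum: since $\varphi^c(y)=\inf_x\{c(x,y)-\varphi(x)\}$, choosing $x=G_\Theta(y)$ gives $\varphi^c(y)\le\|y-\Hc G_\Theta(y)\|-\varphi(G_\Theta(y))$; symmetrically $\psi^c(x)\le\|x-G_\Theta(\Hc x)\|-\psi(\Hc x)$. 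The cycle terms fall out directly, with no cross--remainders, and $\Kd\le\ell_{OT'}+\ell_{cycle}$ follows in one line. This dual test--point substitution is the idea missing from your proposal; once you use it, no gluing lemma, no explicit optimal couplings $\pi_i^\ast$, and no structural hypothesis on $\Hc$ or $G_\Theta$ is required.
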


\begin{proof}
We first define the optimal joint measure $\pi^*$ for the primal problem \eqref{eq:org} with $p=q=1$:% optimal transport.
$$\Kd(\Theta,\Hc)=\min_\pi \int_{\Xc\times \Yc} c(x,y;\Theta,\Hc)\pi(dx, dy) =\int_{\Xc\times \Yc} c(x,y;\Theta,\Hc)\pi^*(dx, dy)$$%\int_{\Xc\times 
where
$ c(x,y;\Theta,\Hc):=\|y-\Hc x\|+\| G_\Theta(y) - x\|.$
Using the two forms of the Kantorovich dual formulations in \eqref{eq:Kdual1} and \eqref{eq:Kdual2}, we have
\begin{align*}
\Kd(\Theta,\Hc)=& \int_{\Xc\times \Yc}  \|y-\Hc x\|+\| G_\Theta(y) - x\| d\pi^*(x,y) \\
=\frac{1}{2}& \left\{\max_{\varphi}\int_\Xc\varphi(x)d\mu(x) + \int_\Yc \varphi^c(y) d\nu(y)+ \max_{\psi}\int_\Xc \psi^c(x) d\mu(x) + \int_\Yc \psi
(y)d\nu(y)\right\}\\
= \frac{1}{2}& \left\{\max_{\varphi}\int_\Xc \varphi(x) d\mu(x) 
+ \int_\Yc \inf_x \{ \|y-\Hc x\|+\| G_\Theta(y) - x\| -\varphi(x) \}d\nu(y)\right.\\
&\left. +\max_{\psi}\int_\Xc \inf_y \{ \|y-\Hc x\|+\| G_\Theta(y) - x\| -\psi(y) \}d\mu(x) + \int_\Yc \psi
(y)d\nu(y)\right\}
\end{align*}
Here, the need to find the optimal joint distribution $\pi^*(x,y)$ is replaced by the maximization with respect to the so-called
Kantorovich potentials $\varphi$ and $\psi$.
Now, instead of finding the $\inf_x$, we choose $x=G_\Theta(y)$; similarly, instead of $\inf_y$,  we chose $y=\Hc x$.
This leads to an upper bound: % this can be upper-bounded by
\begin{eqnarray*}
\Kd(\Theta,\Hc)&\leq & \frac{1}{2} \left\{\max_{\varphi}\int_\Xc \varphi(x) d\mu(x)-\int_\Yc \varphi(G_\Theta(y))d\nu(y) +\int_\Yc  \|y-\Hc G_\Theta(y)\| d\nu(y)\right. \\
&+ &\left. \max_{\psi}\int_{\Yc} \psi(y)  d\nu(y) - \int_\Xc \psi(\Hc x)  d\mu(x) +
\int_\Xc\| G_\Theta(\Hc x) - x\| d\mu(x)
\right\}\\
&=& \frac{1}{2}\left\{\max_{\varphi}\int_\Xc \varphi(x)  d\mu(x) - \int_\Yc \varphi(G_\Theta(y))d\nu(y)+ \max_{\psi}\int_{\Yc} \psi(y)  d\nu(y) - \int_\Xc \psi(\Hc x)  d\mu(x) \right. \\
&&\left.+\int_\Xc\| G_\Theta(\Hc x) - x\| d\mu(x)+ \int_\Yc \|y-\Hc G_\Theta(y)\|d\nu(y)\right\} \\
&=& \ell_{OT'}(\Theta,\Hc)+\ell_{cycle}(\Theta,\Hc)
\end{eqnarray*}
Now, using  1-Lipschitz continuity of the Kantorovich potentials, we have
\begin{eqnarray}
-\varphi(G_\Theta(y))\leq& \| G_\Theta(y) - x\| -\varphi(x) &\leq  \|y-\Hc x\|+\| G_\Theta(y) - x\|-\varphi(x) \label{eq:lower1}\\
-\psi(\Hc x)  \leq&  \|y-\Hc x\| -\psi(y) &\leq  \|y-\Hc x\|+\| G_\Theta(y) - x\| -\psi(y) \label{eq:lower2}
\end{eqnarray}
This leads to the following lower-bound
\begin{eqnarray*}
\Kd(\Theta,\Hc)&\geq&  \frac{1}{2}\left\{\max_{\varphi}\int_\Xc \varphi(x)  d\mu(x) - \int_\Yc \varphi(G_\Theta(y))d\nu(y)+ \max_{\psi}\int_{\Yc} \psi(y)  d\nu(y) - \int_\Xc \psi(\Hc x)  d\mu(x) \right\}  \\
&=& \ell_{OT'}(\Theta,\Hc)
\end{eqnarray*}
By collecting the two bounds, we have
$$\ell_{OT'}(\Theta,\Hc) \leq \Kd(\Theta,\Hc) \leq \ell_{OT'}(\Theta,\Hc)+\ell_{cycle}(\Theta,\Hc).$$
which leads to
\begin{eqnarray*}
| \Kd(\Theta,\Hc) -\Dd(\Theta,\Hc) | \leq \frac{1}{2}\ell_{cycle}(\Theta,\Hc).
\end{eqnarray*}
where $\Dd(\Theta,\Hc)$ is defined in \eqref{eq:Dd}.
Finally, the approximation error becomes zero when  $\ell_{cycle}(\Theta,\Hc)=0$. This happens
for  $(\Theta,\Hc)$ such that
$\Hc G_{\Theta}(y)=y$ and $G_{\Theta}\Hc x = x$ for all $x\in \Xc,y\in \Yc$, which is equal to say
that $G_{\Theta}$ is an exact inverse of $\Hc$.
\end{proof}

Recall that the optimal transportation map can be found by minimizing the
primal OT cost function $\Kd(\Theta,\Hc)$ with respect to $(\Theta,\Hc)$:
\begin{align}
\min\limits_{\Theta,\Hc}\Kd(\Theta,\Hc) 
\end{align}
Using Proposition~\ref{prp:bound}, this problem can be solved by a constrained optimization problem with sufficiently
small tolerance $\epsilon$ for $\ell_{cycle}$:
\begin{align} \label{eq:dualc}
\min\limits_{\Theta,\Hc}~&\Dd(\Theta,\Hc) \quad \mbox{subject to}~\ell_{cycle}(\Theta,\Hc) \le \epsilon
\end{align}
which can be solved using an unconstrained form using the
Lagrangian dual:
\begin{equation}
\inf_{\Theta, \Hc} ~L(\Theta, \Hc, \alpha) := ~\Dd(\Theta,\Hc) + \alpha (\ell_{cycle}(\Theta,\Hc)-\epsilon)
\end{equation}
for some $\epsilon$ dependent Lagrangian multiplier parameter $\alpha$.
Therefore,  the constrained optimization problem \eqref{eq:dualc} can be solved by an easier
 unconstrained optimization problem: 
\begin{align} \label{eq:dualu}
\min\limits_{\Theta,\Hc}~\Dd(\Theta,\Hc) +\alpha \ell_{cycle}(\Theta,\Hc)&\quad\mbox{or}&    \min\limits_{\Theta,\Hc}~\ell_{OT'}(\Theta,\Hc)+\gamma\ell_{cycle}(\Theta,\Hc)
\end{align}
where $\alpha$ or $\gamma:=\alpha+\frac{1}{2}$ is a hyperparameter to adjust during experiments.

Finally, by implementing the Kantorovich potential using a neural network with parameters $\Phi$ and $\Xi$,
i.e. $\varphi:=\varphi_\Phi$ and $\psi:=\psi_\Xi$, 
we have the following cycleGAN problem:
\begin{align}\label{eq:cycleGAN}
\min\limits_{\Theta,\Hc}  \ell_{OT'}(\Theta,\Hc) +\gamma\ell_{cycle}(\Theta,\Hc) =\min\limits_{\Theta,\Hc}\max\limits_{\Phi,\Xi}\ell(\Theta,\Hc;\Phi,\Xi)
\end{align}
where
\begin{eqnarray}
\ell(\Theta,\Hc;\Phi,\Xi) =  \ell_{OTDisc}(\Theta,\Hc;\Phi,\Xi) + \gamma\ell_{cycle}(\Theta,\Hc)\notag
\end{eqnarray}
where $ \ell_{cycle}(\Theta,\Hc)$ denotes the cycle-consistency loss in \eqref{eq:cycle}  and
$\ell_{OTDisc}(\Theta,h;\Phi,\Xi)$ is the discriminator loss given by:
\begin{eqnarray}\label{eq:ellGAN}
\ell_{OTDisc}(\Theta,\Hc;\Phi,\Xi) 
&=&\frac{1}{2} \left\{ \int_{\Xc} \varphi_\Phi(x)d\mu(x) - \int_\Yc \varphi_\Phi(G_\Theta(y)) d\nu(y)  \right. \\ 
&&\left.  + \int_{\Yc} \psi_\Xi(y)d\nu(y) - \int_\Xc \psi_\Xi(\Hc x)d\mu(x) \right\}\notag
\end{eqnarray}
Note that
  the  Kantorovich dual problems should be maximized with respect to all 1-Lipschitz functions, so
the parameterized functions $\varphi$ and $\psi$  should have sufficiently large capacity to approximate any 1-Lipschitz function. This is why we employ
deep neural network to model the 1-Lipschitz Kantorovich potentials.
In neural network implementation, the Kantorovich 1-Lipschitz potentials  $\varphi:=\varphi_\Phi$ and $\psi:=\psi_\Xi$
correspond to the W-GAN discriminators.  
Specifically, 
$\varphi_\Phi$ tries to find the difference between the true image $x$ and the generated image $G_\Theta(y)$,
whereas $\psi:=\psi_\Xi$ attempts to find the fake measurement data  that are generated by the synthetic
measurement procedure $\Hc x$.

\subsection{Special Cases of OT-cycleGAN}
\label{sec:cases}
One of the most important advantages of OT-cycleGAN is that 
depending on the amount of prior knowledge of the forward mapping, it can lead to various cycleGAN architectures:  for example,  the one
in Fig.~\ref{fig:cycleGAN}(d), which is basically equivalent to
the original cycleGAN architecture in Fig.~\ref{fig:cycleGAN}(a), 
or a novel architecture in Fig.~\ref{fig:cycleGAN}(b) where a linear generator can replace a complicated neural network 
generator, or a much simpler novel architecture in Fig.~\ref{fig:cycleGAN}(c), where only a single pair of generator
and discriminator are necessary.
 
 More specifically, suppose that the forward mapping is given by
\begin{eqnarray}
y &=&h\ast x + w \ ,
\end{eqnarray}
where $h$ is an unknown blur kernel and $w$ is noise.
For this, OT-cycleGAN framework suggests the following cost function for the PLS formulation:
\begin{eqnarray}\label{eq:forwardLin}
c(x,y;\Theta,h)=\|y-h\ast x\|+ \| G_\Theta(y) - x\| ,
\end{eqnarray}
which leads to the following Kantorovich dual formulation:
\begin{align}\label{eq:OTopt}
\min_{\Theta,\Upsilon}\max_{\Phi,\Xi}\ell_{OTcycleGAN}(\Theta,\Upsilon;\Phi,\Xi) 
\end{align} 
where the loss is defined by
\begin{align}
\ell_{OTcycleGAN}(\Theta,\Upsilon;\Phi,\Xi) = \gamma \ell_{cycle}(\Theta,\Upsilon)+\ell_{OTDisc}(\Theta,\Upsilon;\Phi,\Xi)
\end{align} 
The specific details of cycle-consistency loss $\ell_{cycle}$ 
and the discriminator loss $\ell_{OTDisc}$ are summarized
in Table~\ref{tbl:algorithm}(b). As compared to the standard cycleGAN in Table~\ref{tbl:algorithm}(a), which requires two deep neural network
based generators $G_\Theta$ and $F_\Upsilon$,  our OT-cycleGAN formulation replaces the $F_\Upsilon$ by the linear convolution
kernel $h$ as shown in Fig.~\ref{fig:cycleGAN}(b),  whose estimation is much simpler. Therefore, the overall neural network training becomes more stable.

In another example, let the forward model be given by \eqref{eq:g}, where the forward operator $\Hc$ is known.
Then, the corresponding PLS cost is given by
\begin{eqnarray}\label{eq:forwardKnown}
c(x,y;\Theta)=\|y-\Hc x\|+ \| G_\Theta(y) - x\| ,
\end{eqnarray}
where the unknown parameter is only $\Theta$.
The corresponding OT-cycleGAN has the similar optimization problem as \eqref{eq:OTopt} with a slight but important modification.
Specifically, as
the forward path has no uncertainty, the maximization of the discriminator $\psi_\Xi$  in \eqref{eq:cycleGAN} with respect to $\Xi$ does
not affect the generator $G_\Theta$. Therefore, the discriminator  $\psi_\Xi$ can be neglected.
This leads to the simpler cycle-consistency and discriminator loss as shown in Table~\ref{tbl:algorithm}(c).
The corresponding network architecture is shown in Fig.~\ref{fig:cycleGAN}(c), where only a single pair of generator and discriminator are
necessary.

Finally, suppose that the prior knowledge of the forward operator $\Hc$ in \eqref{eq:g} is not available or difficult to model.
In this case, we should also estimate the forward operator  using a neural network parameterized by $\Upsilon$.
Then, the corresponding PLS cost is given by
\begin{eqnarray}\label{eq:forwardUnknown}
c(x,y;\Theta,\Upsilon)=\|y-\Hc_\Upsilon(x)\|+ \| G_\Theta(y) - x\| ,
\end{eqnarray}
where the unknown parameters are  $\Theta$ and $\Upsilon$.
This leads to cycle-consistency and discriminator loss as shown in Table~\ref{tbl:algorithm}(d), which is basically
similar to the standard cycleGAN in Table~\ref{tbl:algorithm}(a) except for the specific discriminator term.
Accordingly, the corresponding network architecture is   Fig.~\ref{fig:cycleGAN}(d),  which is the same as the
standard cycleGAN in Fig.~\ref{fig:cycleGAN}(a), where two pairs of generators and discriminators should be estimated.

\begin{table}[!thb]
\centering
\caption{Comparison of standard cycleGAN and OT-cycleGAN in various forward models. }
\label{tbl:algorithm}
\resizebox{0.99\textwidth}{!}{
	\begin{tabular}{cc|c|c}
		\hline
			& {Algorithm}					& {$\ell_{cycle}$}			& $\ell_{Disc}$ or $\ell_{OTDisc}$				\\ \hline\hline
	\multirow{2}{*}{(a)}  &  		\multirow{2}{*}{CycleGAN}   					& $\int_{\Xc} \|x- G_\Theta(F_\Upsilon(x)) \|  d\mu(x) $		&$ \int_{\Xc} \log (\varphi_\Phi(x))d\mu(x) + \int_\Yc \log(1-\varphi_\Phi(G_\Theta(y))) d\nu(y)$		 \\ 
	&		& $+\int_{\Yc} \|y-F_\Upsilon(G_\Theta(y))\|   d\nu(y)$		&$ + \int_{\Yc} \log(\psi_\Xi(y))d\nu(y) + \int_\Xc \log(1-\psi_\Xi(F_\Upsilon(x)))d\mu(x)$		 \\ \hline
	\multirow{2}{*}{(b)}  &  \multirow{2}{*}{OT-CycleGAN for \eqref{eq:forwardLin}}  						& $\int_{\Xc} \|x- G_\Theta(h\ast x) \|  d\mu(x) $		&$ \int_{\Xc} \varphi_\Phi(x)d\mu(x) - \int_\Yc \varphi_\Phi(G_\Theta(y)) d\nu(y)$		 \\ 
&			& $+\int_{\Yc} \|y-h \ast G_\Theta(y) \|   d\nu(y)$		&$ + \int_{\Yc} \psi_\Xi(y)d\nu(y) - \int_\Xc \psi_\Xi(h\ast x)d\mu(x)$		 \\ \hline
	\multirow{2}{*}{(c)}  &  \multirow{2}{*}{OT-CycleGAN for \eqref{eq:forwardKnown}}  						& $\int_{\Xc} \|x- G_\Theta(\Hc x) \|  d\mu(x) $		&$ \int_{\Xc} \varphi_\Phi(x)d\mu(x) - \int_\Yc \varphi_\Phi(G_\Theta(y)) d\nu(y)$		 \\ 
&			& $+\int_{\Yc} \|y-\Hc G_\Theta(y) \|   d\nu(y)$		&		 \\ \hline
	\multirow{2}{*}{(d)}  &  		\multirow{2}{*}{OT-CycleGAN for \eqref{eq:forwardUnknown}}   					& $\int_{\Xc} \|x- G_\Theta(\Hc_\Upsilon(x)) \|  d\mu(x) $		&$ \int_{\Xc} \varphi_\Phi(x)d\mu(x) - \int_\Yc \varphi_\Phi(G_\Theta(y)) d\nu(y)$		 \\ 
	&		& $+\int_{\Yc} \|y-\Hc_\Upsilon(G_\Theta(y))\|   d\nu(y)$		&$ + \int_{\Yc} \psi_\Xi(y)d\nu(y) - \int_\Xc \psi_\Xi(\Hc_\Upsilon(x))d\mu(x)$		 \\ \hline
	\end{tabular}
		}
\end{table}

 Therefore, we can see that in our OT-cycleGAN formulation,
  the physics-driven data consistency term
can simplify the neural network architecture, and also provide a constraint such that the trained neural network can 
  generate physically meaningful output.

\subsection{Comparison with Other Generalization}

Another important advantage of the proposed method is that the unknown image $x$ can be obtained as an output of the
{\em feedforward} neural network $G_\Theta(y)$ for the given measurement $y$. This makes the inversion procedure much simpler.

One may say that the PLS cost function for the deep image prior (DIP) in \eqref{eq:DIP}
can produce a different variation of unsupervised learning. However, the resulting
formulation is not a feed-forward neural network.
Specifically, the primal problem of the OT problem with respect to the DIP cost in \eqref{eq:DIP}
is given by
\begin{eqnarray}
\min_{\pi\in \Pi(\mu, \eta)}\int_{\Xc\times \Zc} \| y - \Hc G_\Theta(z)\| d\pi(x, z) 
\end{eqnarray}
where we consider $y$ and $z$ as random variable, $\mu$,$\nu$ and $\eta$ represent distributions of $x$,$y$ and $z$ respectively, and we use non-squared norm for simplicity.
This leads to the following dual formulation:
\begin{align*}
\min_\Theta\min_\pi\int_{\Xc\times \Zc} \| y - \Hc G_\Theta(z)\| d\pi(x, z) 
=  \min_\Theta\max_{\psi}\int_{\Yc} \psi(y) d\nu(y) - \int_\Zc \psi(\Hc G_\Theta(z))  d\eta(z) .
\end{align*}
where $\psi$ is a 1-Lipschitz function.
Although this is a nice way of pretraining a deep image prior model using an unmatched training data set,
the final image estimate still comes from the following optimization problem:
$$x = G_\Theta(z^*) \quad \mbox{where}\quad z^* = \arg\min_z \|y-\Hc G_{\Theta^*}(z)\|$$
where $\Theta^*$ is the estimated network parameters from previous training step.
This is equivalent to the deep generator model \cite{bora2017compressed}, which
is not a feed-forward neural network and requires additional optimization at the test phase.

\subsection{Implementation}
  
Note that we only consider $p=q=1$ due to the inequalities \eqref{eq:lower1} and \eqref{eq:lower2} for 1-Lipschitz $\varphi$ and $\psi$.
However, the use of the general PLS  cost  would be interesting, and it may lead to an interesting
 variation of the cycleGAN architecture. This could be  done  using a regularized version of OT \cite{peyre2019computational}.

Having said this, imposing 1-Lipschitz condition for the discriminator is the main idea of the Kantorovich dual formulation as in W-GAN \cite{arjovsky2017wasserstein}; therefore, 
care should be taken  to ensure that the Kantorovich potential  becomes 1-Lipschitz.
There are many approaches to address this. For example, in the original W-GAN paper \cite{arjovsky2017wasserstein}, the weight
clipping was used to impose 1-Lipschitz condition.
Another method is to use the spectral normalization method  \cite{miyato2018spectral}, which utilizes the power iteration method to impose
constraint on the largest singular value of weight matrix in each layer.
Yet another popular method is the  WGAN with the gradient penalty (WGAN-GP), where
the gradient of the Kantorovich potential is constrained to be 1 \cite{gulrajani2017improved}.
Specifically,  for the case of  Table~\ref{tbl:algorithm}(c),
$\ell_{OTDisc}$ is modified as
\begin{align}\label{eq:ourWGANnonblind2}
&\ell_{OTDisc}(\Theta;\Phi) \\
&= \left(\int_\Xc \varphi_\Phi(x)d\mu(x) - \int_\Yc \varphi_\Phi(G_\Theta(y))d\nu(y)\right) \notag \\
&-\eta \int_{{\Xc}}(\|\nabla_{\tilde x}\varphi_\Phi(x)\|_2 - 1)^2d\mu(x) \notag
\end{align}
where 
$\eta>0$ is the regularization parameters to impose 1-Lipschitz property for the discriminators, and 
  $\tilde x=\alpha x+(1 - \alpha)G_\Theta(y)$ with $\alpha$ 
being random variables from the uniform distribution between $[0,1]$ \cite{gulrajani2017improved}. 
%This approach is somewhat complicated due to the random sampling in calculating the gradient.
In this paper,  we use WGAN-GP as our implementation to impose 1-Lipschitz constraint.

Finally, the pseudocode implementation of our OT-cycleGAN is shown in Algorithm~\ref{alg:Pseudocode} for the
case of Table~\ref{tbl:algorithm}(d). Algorithms for other variation of OT-cycleGAN can be simply modified from Algorithm~\ref{alg:Pseudocode}.

\begin{algorithm}
	\caption{Pseudocode implemenation of OT-CycleGAN in Table~\ref{tbl:algorithm}(d).}
	\label{alg:Pseudocode}
	\begin{algorithmic}[1]
		\State Given: unpaired training samples $\{x^{(n)}\}_{n=1}^N$ and $\{y^{(m)}\}_{m=1}^M$ from $(X,\mu)$ and $(Y,\nu)$.
		\State Initialization: 
		 $\Theta:=\Theta^{(0)}, \Upsilon:=\Upsilon^{(0)},\Phi:=\Phi^{(0)}, \Xi:=\Xi^{(0)}.$
		\For{$k=0:K$}
		\For{each mini-batch}
		\State 
		$(\Phi^{(k)},\Xi^{(k)}):= \arg\max\limits_{\Phi,\Xi}\ell_{OTDisc}(\Theta^{(k)},\Upsilon^{(k)};\Phi,\Xi)$
		\EndFor
		\For{each mini-batch}
		\State 
		$(\Theta^{(k)},\Upsilon^{(k)}):= \arg\min\limits_{\Theta,\Upsilon}\gamma \ell_{cycle}(\Theta,\Upsilon)+\ell_{OTDisc}(\Theta,\Upsilon;\Phi^{(k)},\Xi^{(k)})$
		\EndFor
		\EndFor
	\end{algorithmic}
\end{algorithm}

\section{Experimental Results}
\label{sec:exp}
Among many potential  applications, here we  discuss its application to accelerated MRI, super-resolution
microscopy, and low-dose CT problems. 
Before we dive into three applications, we briefly compare the physics of each case.
In Section \ref{sec:mri}, the forward operator is subsampling in Fourier domain, and we know the pattern of subsampling.
In other word, we have full knowledge about the forward operator.
In Section \ref{sec:microscopy},  the measurement
comes from  is linear blurring kernel, even though the exact value of the kernel is unknown.
In the last example of Section \ref{sec:ct},  the goal is to remove noises from  low-dose CT image,
and due to the complicated CT forward physics at low-dose acquisition, the forward operator is not known.
Therefore, these three examples corresponds to the special cases of OT-cycleGAN in Table~\ref{tbl:algorithm}(b)-(d),
and the goal of the experiments is to demonstrate that OT-cycleGAN is general enough to cover all these cases.

\subsection{Accelerated MRI}
\label{sec:mri}
In accelerated magnetic resonance imaging (MRI), the goal is to recover high-quality MR images from sparsely sampled $k$-space data to reduce the acquisition time.  
This problem has been extensively studied  using compressed sensing \cite{lustig2007sparse}, but recently, deep learning approaches have been the
main research interest due to their excellent performance and significantly reduced run-time complexity \cite{hammernik2018learning,han2018k}.

A standard deep learning method for accelerated MRI is based on supervised learning, where the MR images from fully sampled $k$-space
data are used as references and  subsampled $k$-space data are used as the input for the neural network for training.
Unfortunately, in accelerated MRI, the acquisition of
high-resolution fully sampled $k$-space data takes significant long time, and often requires changes of  the standard acquisition protocols.
Therefore, collecting sufficient amount of training data is a major huddle in practice,
and the need for unsupervised learning without matched reference data is increasing.

\begin{figure}[!hbt]
  \center{
   \includegraphics[width=0.7\textwidth]{./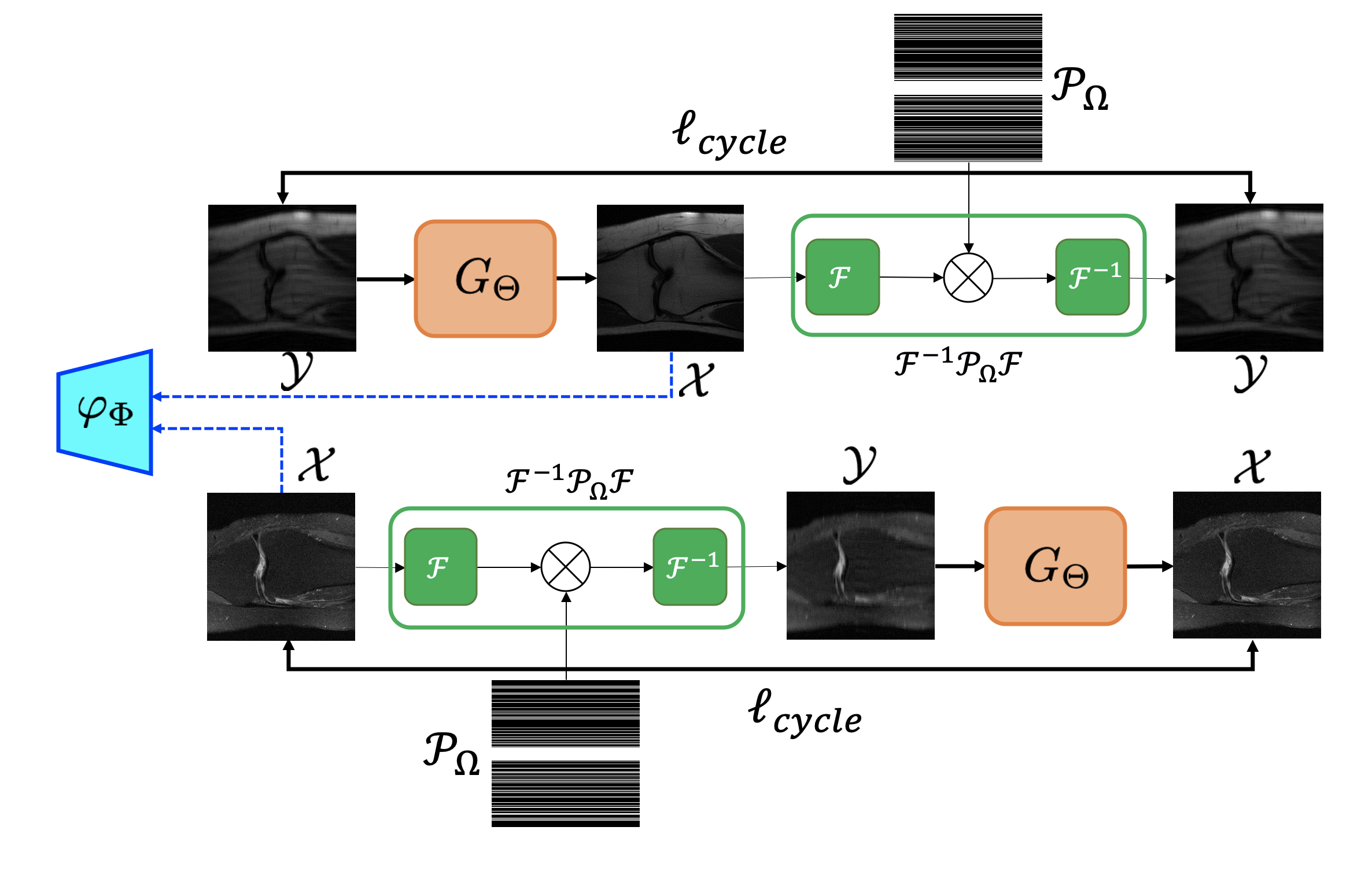}
}
  \caption{Proposed cycleGAN architecture with 1-D downsampling patterns for accelerated MRI.
  		This corresponds to OT-CycleGAN in Fig.~\ref{fig:cycleGAN}(c) since the forward operation is a deterministic subsampled Fourier transform. Therefore, we only need one discriminator  and one generator.
  }
  \label{fig:mr_arch}
\end{figure}

In accelerated MRI, the forward measurement model can be described as
\begin{eqnarray}\label{eq:forwardMR}
\hat x=  \Pc_\Omega \Fc x+w
\end{eqnarray}
where $\Fc$ is the 2-D Fourier transform, $w$ is the measurement error, and $\Pc_\Omega$ is the projection to  $\Omega$ that denotes $k$-space sampling indices.
To implement every step of the algorithm as an image domain processing,  \eqref{eq:forwardMR} can be converted to the image domain forward model
by applying the inverse Fourier transform:
\begin{eqnarray}
y=  \Fc^{-1} \Pc_\Omega \Fc x+\Fc^{-1} w
\end{eqnarray}
This results in the following  cost function for the PLS formulation:
\begin{eqnarray}
c(x,y;\Theta)=\|y-\Fc^{-1} \Pc_\Omega \Fc x\|+ \| G_\Theta(y) - x\| 
\end{eqnarray}

One of the important considerations in designing the OT-cycleGAN architecture for accelerated MRI is that
the sampling mask $\Omega$ is known so that the forward mapping for the inverse problem is fully known.
This implies that the forward path has no uncertainty and we do not need to estimate $\Hc= \Fc^{-1} \Pc_\Omega \Fc$.
This corresponds to the OT-cycleGAN in Table~\ref{tbl:algorithm}(c).

The schematic diagram of the resulting cycleGAN architecture is illustrated in Fig.~\ref{fig:mr_arch},
where only a single generator and discriminator pairs is necessary. 
The leftmost images are  image samples of $\Xc$ and $\Yc$, which are from full and under-sampled MR $k$-space data.
Given the samples, the generator network and the forward operator generate  fake fully sampled image and 
an artifact-corrupted image, and then return them back to original ones. 
	Among this, the fake fully sampled image is passed to a discriminator. 
Note that we just need a single generator and discriminator, as discussed before.

We use single coil dataset from fastMRI challenge \cite{zbontar2018fastmri} for our experiments. This dataset is composed of MR images of knees. We extracted 3500 MR images from fastMRI single coil validation set.
The dataset is paired, but we do not utilize paired information during training. The paired information is only used to evaluate metric during test.
Then, 3000 slices are used for training/evaluation, and 500 slices are used for test. These MR images are fully sampled images, so we make undersampled images by a randomly subsampling k-space lines. The acceleration factor is four, and autocalibration signal (ACS) region contains 4\% of k-space lines. Each slice is normalized by standard deviation of the magnitude of each slice. To handle complex values of data, we concatenate real and imaginary values along the channel dimension. Each slice has different size, so we use only single batch. The images are center-cropped to the size of $320\times 320$, and then the peak signal-to-noise ratio (PSNR) and structural similarity index (SSIM) values are calculated.
Here, the PSNR and SSIM are defined as
\begin{equation}\label{eq:psnr}
{\rm PSNR} = 20 \log_{10} \left(\frac{n\|x^*\|_\infty}{\| x- x^*\|_2}\right)
\end{equation}
and
\begin{equation}\label{eq:ssim}
{\rm SSIM} = \frac{(2\mu_{x}\mu_{x^*}+c_1)(2\sigma_{x x^*}+c_2)}{(\mu_{x}^2+\mu_{x^*}^2+c_1)(\sigma_{x}^2+\sigma_{x^*}^2+c_2)},
\end{equation}
where $x$ and $x^*$ denote the reconstructed images and ground truth, respectively,
  $n$ is the number of pixels,  $\mu_{x}$ is an average of $x$, $\sigma_{x}^2$ is a variance of $x$ and $\sigma_{x x^*}$ is a covariance of $x$ and $x^*$,
  and $c_1, c_2$ are two variables to stabilize the division.

\begin{figure}[!h]
  \centerline{\includegraphics[width=0.7\textwidth]{./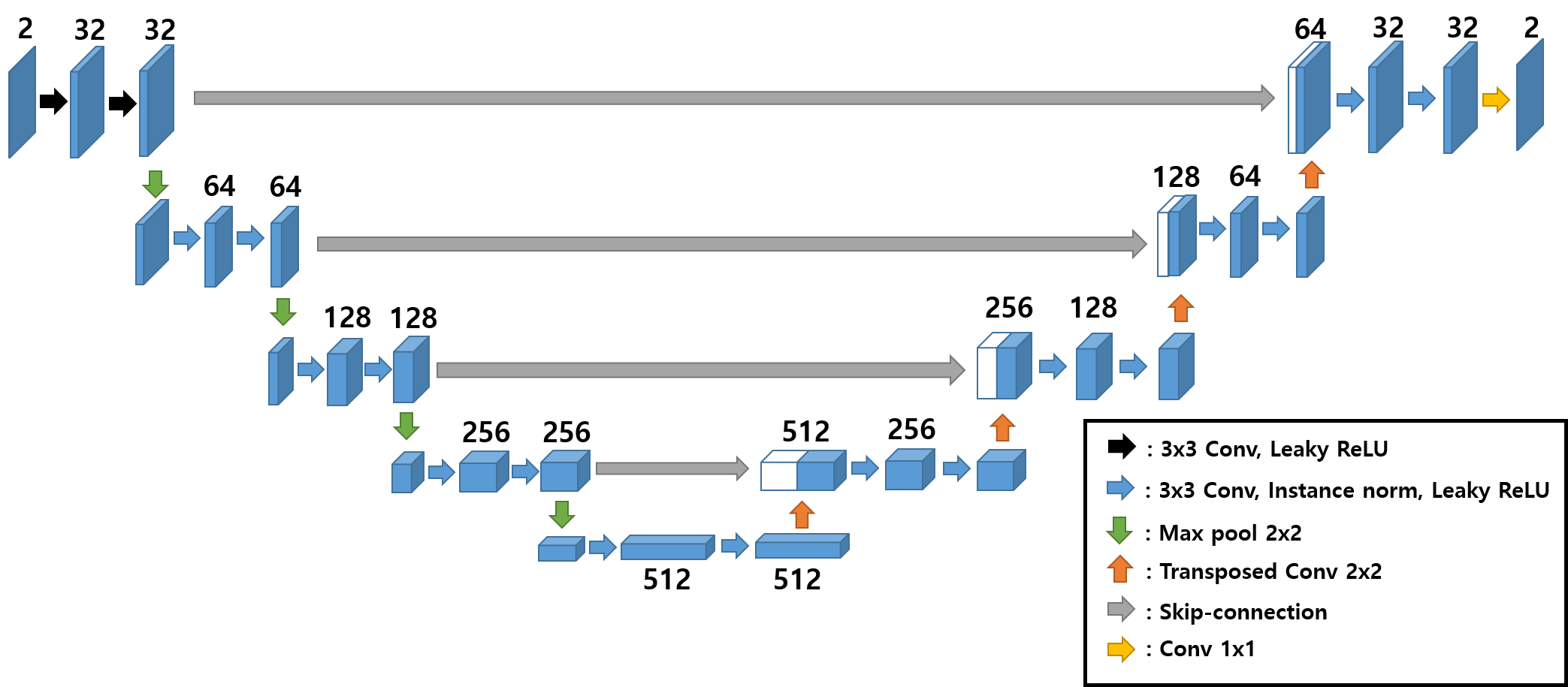}}
  \centerline{\mbox{(a)}}  
    \centerline{\includegraphics[width=0.4\textwidth]{./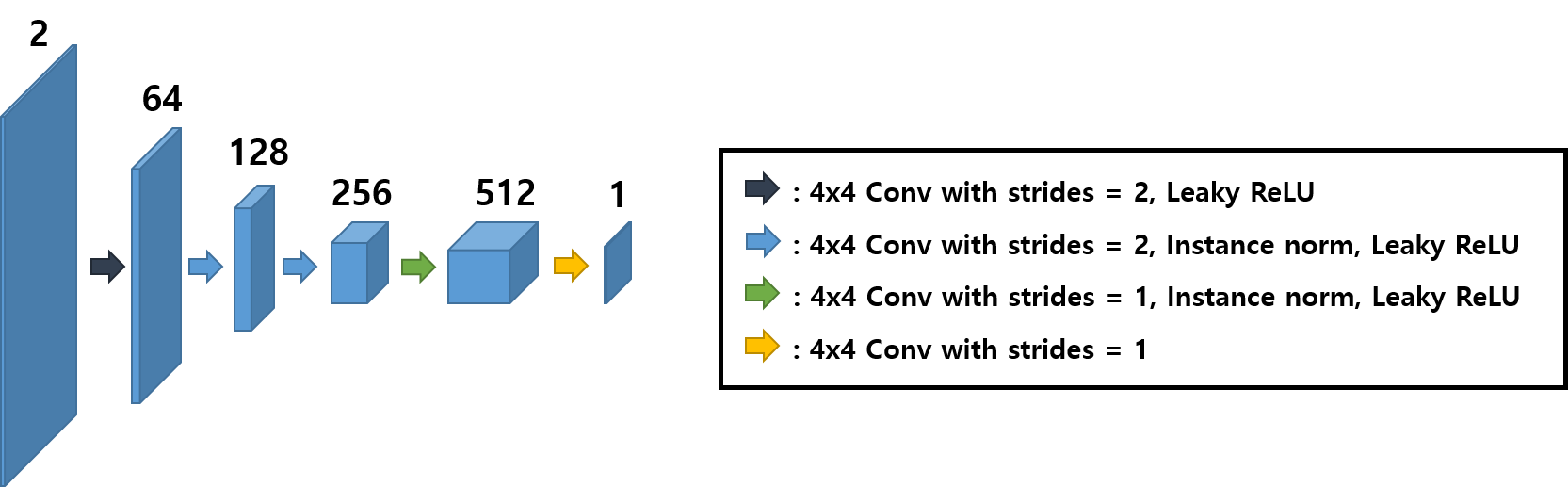}}
  \centerline{\mbox{(b)}}  
  \caption{Proposed network architectures for (a) generator and (b) discriminator for accelerated MRI.}
  \label{fig:mr_net}
\end{figure}

\begin{figure}[!hbt]
  \center{
   \includegraphics[width=0.8\textwidth]{./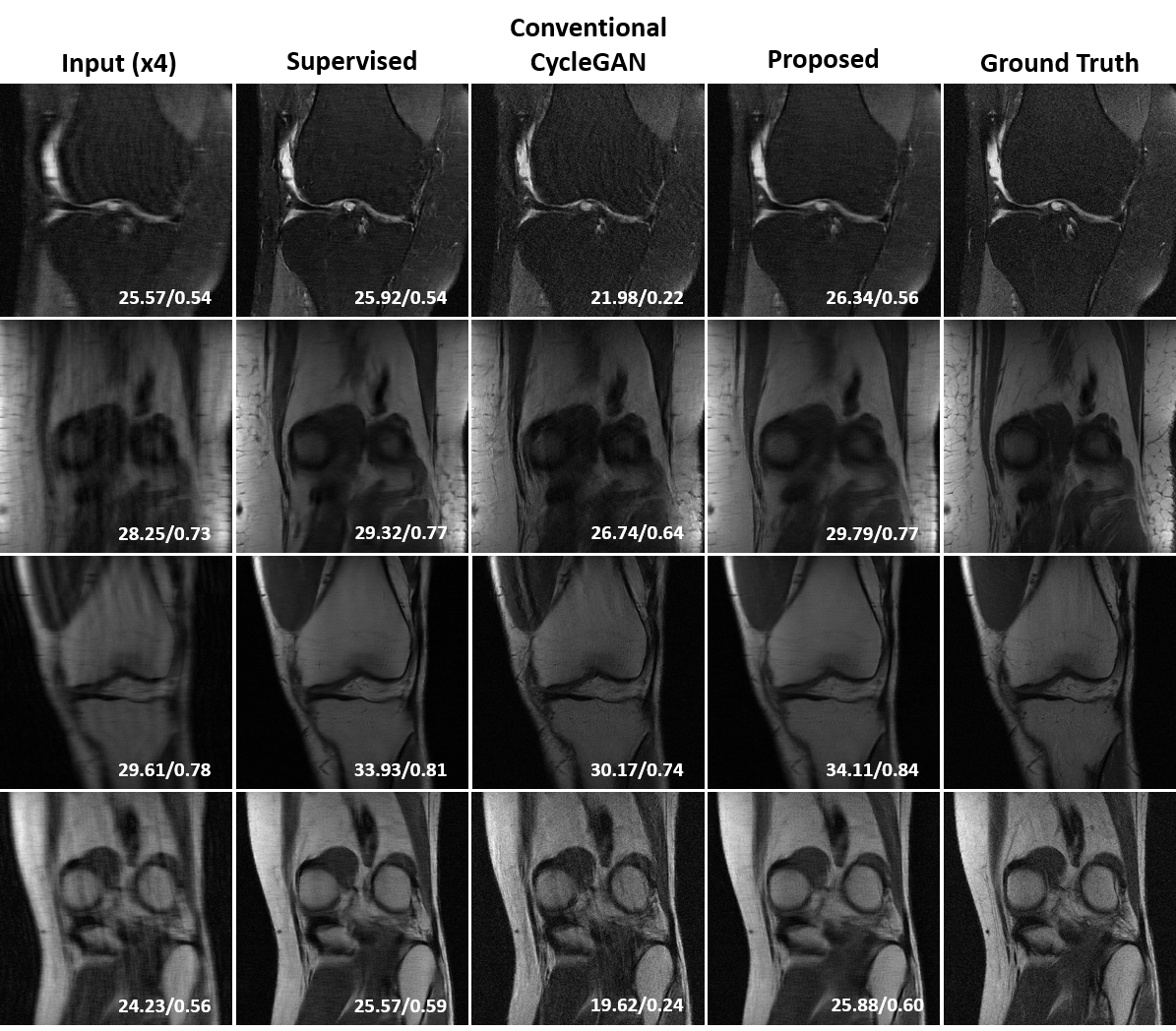}
}\vspace{-0.2cm}
  \caption{Unsupervised learning results for accelerated MRI using proposed cycleGAN. The values in the corners are PSNR/SSIM values for each image. }
  \label{fig:mr_result}
\end{figure}

We use U-Net generator to reconstruct fully sampled MR images from undersampled MR images as shown in Fig.~\ref{fig:mr_net}(a). Our generator consists of $3\times 3$ convolution, instance normalization, and leaky ReLU operation. Also, there are skip-connection and pooling layers. At the last convolution layer, we do not use any operation.
We use PatchGAN architecture \cite{isola2017image} as our discriminator network, so the discriminator classifies inputs at patch scales. The discriminator also consists of convolution layer, instance normalization, and leaky ReLU operation as shown in Fig.~\ref{fig:mr_net}(b). 
We use the WGAN-GP to impose 1-Lipschitz constraint for the discriminator.
We use Adam optimizer to train our network, with momentum parameters $\beta_1=0.5, \beta_2=0.9$, and learning rate of 0.0001. The discriminator is updated 5 times for every generator updates. We use batch size of 1, and trained our network during 100 epochs. Our code was implemented by TensorFlow.

We implement other methods to compare the result of proposed method, but we do not compare with compressed sensing based methods because they are time-consuming and burdensome to find hyperparameters. Compared methods are implemented with the same generator architecture as the proposed method.
The reconstruction results in Fig.~\ref{fig:mr_result} and quantitative comparison results for all test sets in Table~\ref{tbl:MRresult} clearly show that
our OT-cycleGAN architecture with a single generator
successfully recovered fine details without matched references.
Moreover, the performance is even better than that of the standard cycleGAN,
and it is comparable with that of supervised learning.

\begin{table}[!thb]
   \centering
   \caption{Quantitative comparison for various algorithms on 500 test sets of fastMRI data.}
   \label{tbl:average metric}
   \resizebox{0.9\textwidth}{!}{
      \begin{tabular}{c | c c c c}
         \hline
         \ {Metric}         & Input ($\times 4$)      & Supervised Learning      & Conventional CycleGAN      & Proposed Method   \\ \hline\hline
         \ PSNR (dB) (\ref{eq:psnr})         & 26.81               & 27.96               & 25.01               & \textbf{28.17}   \\
         \ SSIM (\ref{eq:ssim})        & 0.6419               & 0.6419               & 0.4689               & \textbf{0.6550}   \\ \hline
      \end{tabular}
   }
	\label{tbl:MRresult}
\end{table}

\subsection{Single-molecule Localization Microscopy}
\label{sec:microscopy}

Single-molecule localization microscopy methods, such as STORM 
\cite{rust2006sub} and (F)PALM \cite{hess2006ultra, betzig2006imaging}, utilize
sparse activation of photo-switchable fluorescent probes in both temporal and
spatial domains. Each activated probe is assumed as an ideal point source 
so that one can achieve sub-pixel accuracy on the order of tens of nanometers for the estimated location of each probe \cite{rust2006sub,betzig2006imaging, smith2010fast,henriques2010quickpalm,parthasarathy2012rapid}. In general, reconstruction of sub-cellular structures relies on numerous localized probes, which leads to relatively long acquisition time.
To reduce the acquisition time,  high density imaging with sparsity regularized deconvolution
approaches such as CSSTORM \cite{zhu2012faster} (Compressed sensing STORM)  and deconvolution-STORM (deconSTORM)\cite{mukamel2012statistical} have been investigated.
Recently,  CNN approaches such as DeepSTORM \cite{deepSTORM2018} have been extensively
studied as fast and high-performance alternatives. 
Unfortunately, the existing CNN approaches usually require matched high-resolution images for supervised training, which is not practical.

To address the problem,   we are interested in developing unsupervised learning approach for super-resolution microscopy.
Mathematically, a blurred measurement
 can be described as
\begin{eqnarray}\label{eq:forward}
y &=&h\ast x + w \ ,
\end{eqnarray}
where $h$ is the PSF.
Here, we consider the blind deconvolution problem 
where both the unknown PSF $h$  and the image $x$ should be estimated.
This leads to the OT-cycleGAN corresponding to Table~\ref{tbl:algorithm}(b).
The schematic diagram of the corresponding cycleGAN architecture is illustrated in Fig.~\ref{fig:deconv_arch}.
The leftmost images are samples from low-resolution or high-resolution images.
Given the samples, the generator network and the linear blur operator generate  fake high-resolution image and blurred image.
Two pairs of images, the fake  and real high-resolution images, and the synthetically blurred and the low-resolution measured images, are then passed to discriminators.
  
In contrast to the standard cycleGAN approaches that require two deep generators, 
the proposed cycleGAN approach needs  only a single deep generator, and the blur generator is implemented using
a linear convolution layer corresponding to the unknown PSF. It is important to note that unlike the accelerated MRI problem in the previous section,
we still need two discriminators, since both linear and deep generators should be estimated.
Still the simplicity of the proposed cycleGAN architecture compared to the standard cycleGAN  significantly improves the robustness of network training.

\begin{figure}[h!]
\center{\includegraphics[width=0.7\textwidth]{./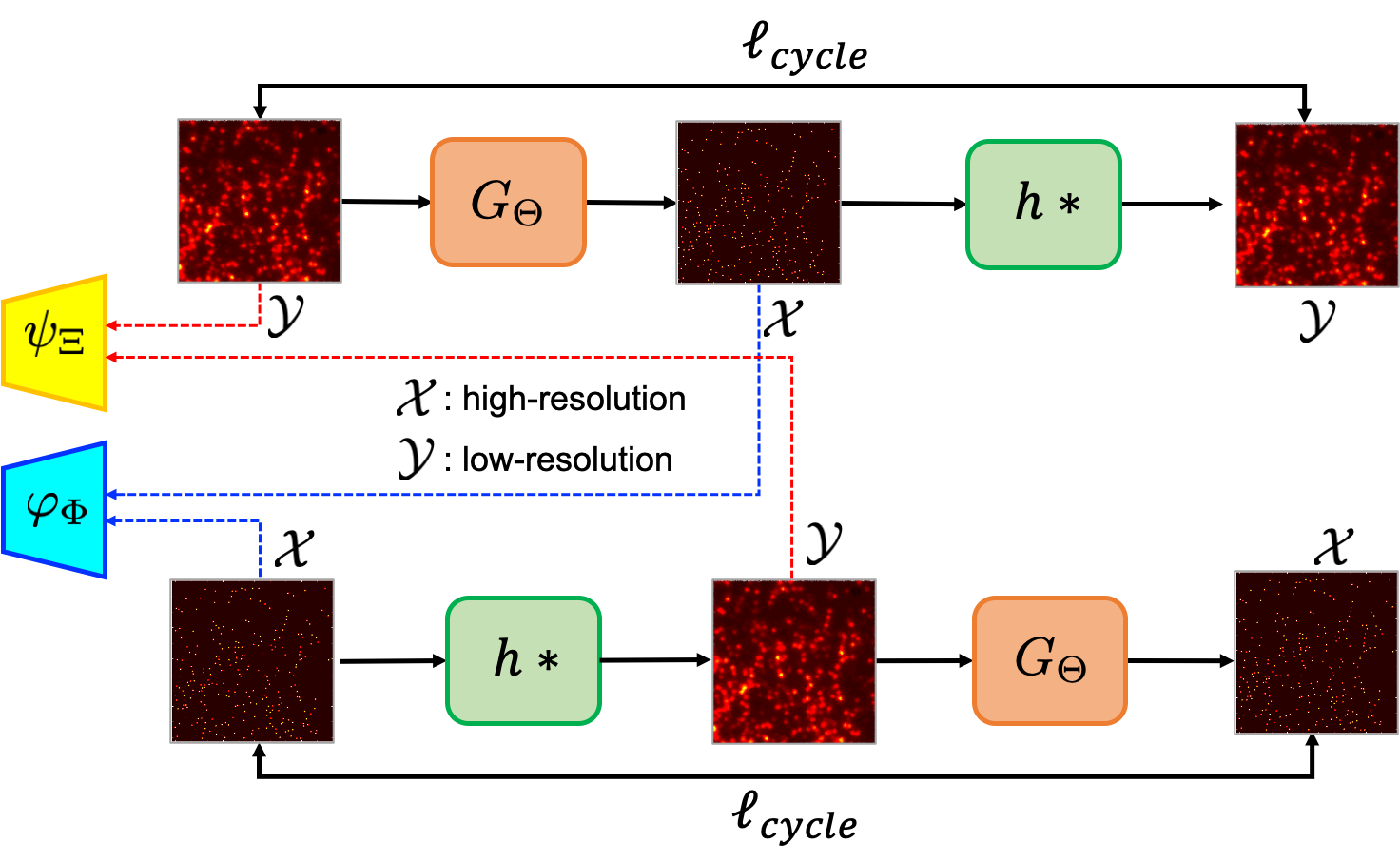}}
\caption{Proposed cycleGAN architecture with a blur kernel for deconvolution microscopy.
This corresponds to OT-CycleGAN in Fig.~\ref{fig:cycleGAN}(b) since the blurring of images in microscopy is modeled as a convolution with a linear unknown kernel. Therefore, one of the generators can be replaced by a simple linear layer.
}

\label{fig:deconv_arch}
\end{figure}

The architecture of the high resolution image generator $G_{\Theta}$ from the low-resolution image is a modified U-net \cite{RonnebergerFB15} as shown in Fig~\ref{fig:deconv_net}(a).
Our U-net consists of two parts: encoder and decoder. For both encoder and decoder, {batch normalization} \cite{ioffe2015batch} and ReLU layers follow every convolution layer except the lasted one. At the lasted layer, only batch normalization layer follows the convolution layer.
Throughout the whole network, the convolutional kernel size is $3\times3$ with no stride. A max pooling layer exist at the encoder part with kernel size $2\times2$ and stride 2. 
The number of the channel increases and becomes the maximum at the end of encoder part. In this paper, the maximum number of the channel is 300.

For the second generator, a single 2D convolution layer is used. This layer is to the model of 2D PSF to generate
blurry low-resolution image from high resolution input. The kernel size of the single 2D convolution layer is set to $10\times10$.
As for the discriminators, we used simple convolution-based architecture as shown in Fig \ref{fig:deconv_net}(b). 
It has three layer-blocks. Each block consists of {2D convolution} - {batch normalization} - {ReLU}. 
We use the WGAN-GP to impose 1-Lipschitz constraint for the discriminators.
Throughout the whole network, the convolutional kernel size is $5\times5$ with stride 3. 
After all three blocks, fully
connected layer followed by a sigmoid function is added to generator scalar value decision between 0 and 1. 

\begin{figure}[!h]
	\centerline{\includegraphics[width=0.7\textwidth]{./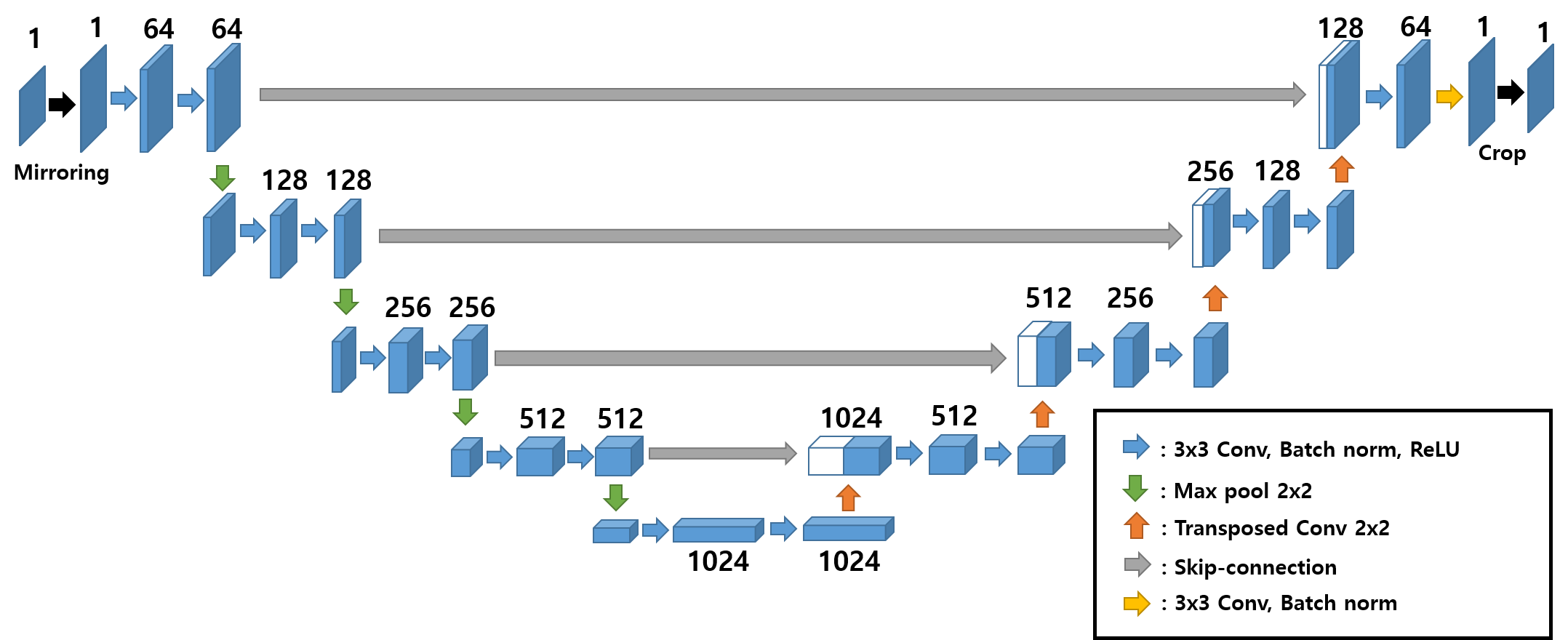}}
	\centerline{\mbox{(a)}}  
	\centerline{\includegraphics[width=0.4\textwidth]{./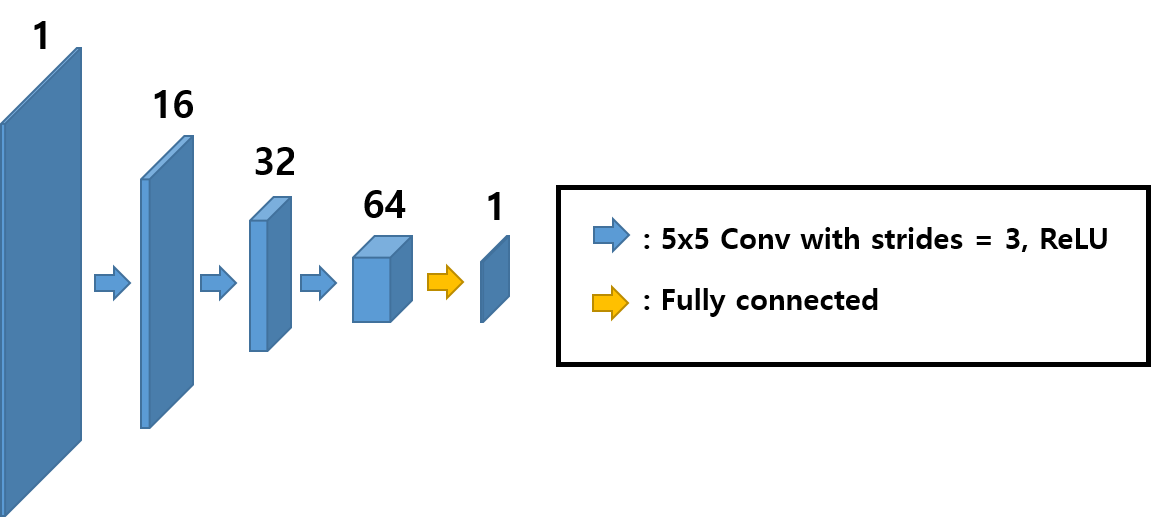}}
	\centerline{\mbox{(b)}}  
	\caption{Proposed network architectures for (a) generator and (b) discriminator for high-resolution image.}
	\label{fig:deconv_net}
\end{figure}

\begin{figure*}[h]
\centering\includegraphics[width=0.95\textwidth]{./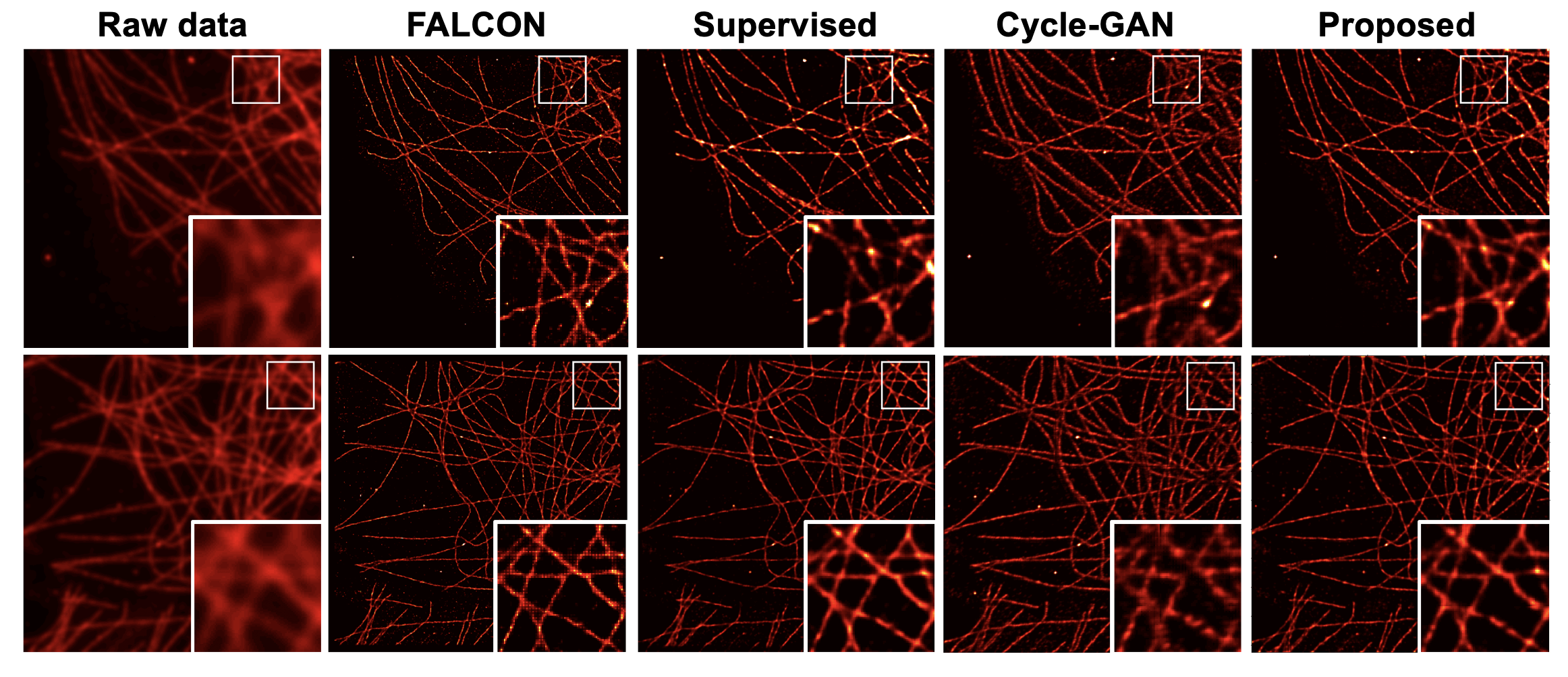}
\caption{Reconstruction results by various methods.  (First column) low resolution raw data from SMLM, (second column) FALCON reconstruction, (third column) supervised learning results, (fourth) standard cycleGAN,  and (the last column)  the proposed OT-cycleGAN.
The image within white box is magnified by 2 times.
}
\label{fig:deconv_result}
\end{figure*}

For our experiments, three real single molecule localization microscope (SMLM) image sets obtained from $\alpha$-tubulin subunits of microtubules are used. Among the three data sets, the first data set composed of 500 frames of  $128\times 128$ image was used for training.
The second data sets was composed of 1000 frames of  $128\times 128$ measurement,
and the third data set was composed of 2000 frame of $128\times 128$ measurement. These two data set were used for test.
Using data augmentation by vertical and horizontal flipping, we increased the training data set size by four times. Among them,  1800 frames were used for training, and the other 200 frames for validation.
From these low resolution measurements, high resolution images  of $256\times 256$ 
were reconstructed using neural networks.
To generate the larger size images, the low resolution images are first upsampled by four times  (i.e. $2\times 2$ upsampling) to be used
as a network input.
To overcome GPU memory limitation, the input images were divided into $128\times128$ size patches. We normalized the scale of the whole input images to [0,1]. We employed  Adam optimizer \cite{kingma2014adam} and the learning rate was set to 0.0001 (constant value). The number of epoch was 50, and 1800 optimization were done per each epoch.
Due to a lack of matched label data, 
we use the random point set images as high resolution data set in our unsupervised training, and our neural
network was trained to follow the distribution of random point sets.

As for comparison,  we employed a state-of-the-art method called FAst Localization algorithm based on a CONtinuous-space formulation
(FALCON) \cite{Min2014}
that encompass classical PLS, and DeepSTORM \cite{deepSTORM2018} that uses the U-Net architecture in a supervised learning framework.
For supervised training, we used the high resolution images from FALCON as label data.

Fig.~\ref{fig:deconv_result} shows the comparison results for two test data set.
To generate the images, each frame of the SMLM was processed with the super-resolution algorithm, after which
all the temporal frames are averaged.
As shown in  Fig.~\ref{fig:deconv_result}, all methods provide the comparable reconstruction results while FALCON image looks sharper
since it is based on the centroid coordinate estimation by Talyor series expansion, rather than image deconvolution.
However, the proposed method provides reconstructed images comparable with supervised learning approach, and superior to the standard cycleGAN.
To quantify the resolution improvement, we use the 
Fourier ring correlation (FRC) \cite{Koho2019}, which is a widely used criterion for resolution estimation.
Interestingly, the FRC resolution shown in Table \ref{tbl:average metric 2D} showed that the proposed method
provides the best FRC resolution.

\begin{table}[!thb]
   \centering
   \caption{Quantitative comparison of reconstruction resolution using FRC by various algorithms on the two test data sets. The unit of the FRC is nm and the value shows average and standard deviation.}
   \label{tbl:average metric 2D}
   \resizebox{0.85\textwidth}{!}{
      \begin{tabular}{c | c c c c}
         \hline
         \ {Test data}		&FALCON					& Supervised Learning	 & Standard CycleGAN & Proposed\\ \hline\hline
         \ Test set 1	&${35.30 \pm 6.16}$ (nm)	&${35.24 \pm 6.32}$ (nm)	& ${46.05 \pm 5.78}$ (nm)	 & ${\textbf{30.33} \pm 4.42}$ (nm) \\ 
         \ Test set 2	&${32.29 \pm 5.19}$ (nm)	&${32.68 \pm 4.93}$	(nm) & ${36.77 \pm 2.01}$ (nm)		&${\textbf{27.82} \pm 4.16}$ (nm)\\\hline
      \end{tabular}
      }
	\label{tbl:2Dresult}
\end{table}

\subsection{Image Denoising in X-Ray CT}
\label{sec:ct}

X-ray CT retains a potential risk of cancer, induced by radiation exposure. Althought the low dose CT (LDCT) techniques can reduce the radiation
risk,  it also sacrifies the signal-to-noise ratio (SNR) in the obtained images compared to standard-dose CT (SDCT). Therefore, extensive studies have been carried out  to achieve reduced noise level in LDCT images. While model-based iterative reconstruction (MBIR) methods \cite{beister2012iter, ramani2012split, sidky2008cone, chen2008piccs} have been developed to address the problem,  the MBIR methods usually suffer from long reconstruction time due to iterative procedure. 
Recently, deep learning approaches demonstrated improved performance compared to the conventional methods \cite{kang2017deep, kang2018framelet, chen2017residual}. 
Unfortunately, it is difficult to get paired LDCT and SDCT data in clinical practice. Therefore, the approach based on a unsupervised learning is necessary for practical applications. 

Since the noise patterns in LDCT contains many streaking  and nonlinear phenomenon dependent artifacts,
it is difficult to model the noise as simple Gaussian noise distribution.  Therefore,
rather than modeling the noise with specific closed-form statistics,  we assume that
the  LDCT image is given by the nonlinear mapping
from the SDCT image. This is modeled by another neural network
$\Hc_\Upsilon$.   The resulting OT-cycleGAN then has loss functions given in Table~\ref{tbl:algorithm}(d).

The schematic diagram of the resulting cycleGAN architecture is illustrated in Fig.~\ref{fig:ct_arch}.
The leftmost images are  samples from low-dose or full-dose CT images.
Given the samples, two generator networks generate fake high-does and low-dose image, respectively, and then return them
to the original images.
The fake images and real samples are passed to two discriminators (the Kantorovich potentials).
In fact, this structure is same as the standard GAN in Fig.~\ref{fig:cycleGAN}(a) and Table~\ref{tbl:algorithm}(a).

\begin{figure}[!h]
	\center{
		\includegraphics[width=0.8\textwidth]{./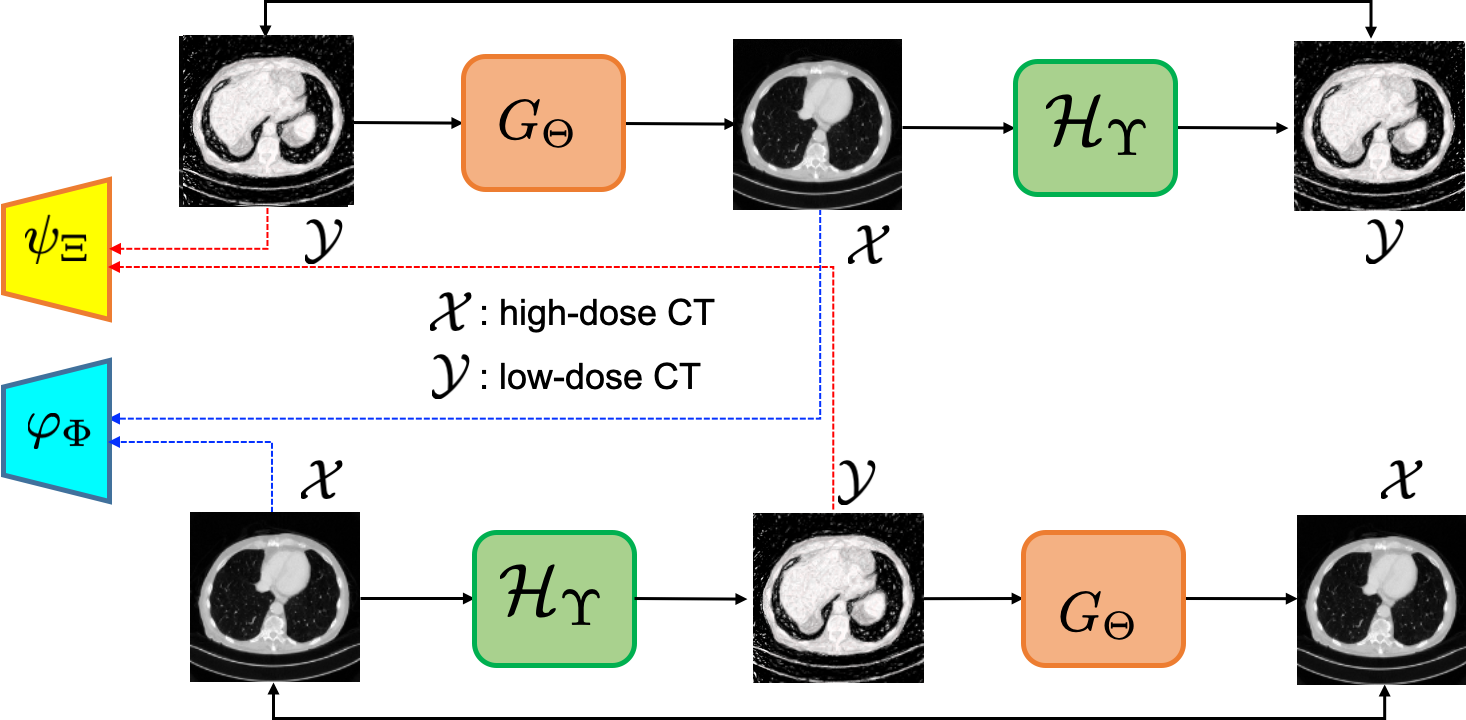}
	}
	\caption{Proposed OT-cycleGAN architecture for CT image denoising. 
		This corresponds to OT-CycleGAN in Fig.~\ref{fig:cycleGAN}(d) since the forward operator is not known. Therefore, we need
		two pairs of generators and discriminators implemented using neural networks.
}
	\label{fig:ct_arch}
\end{figure}

The generators $G_\Theta$ and $\Hc_\Upsilon$ are implemented using the U-net structure as shown in Fig.~\ref{fig:ct_net}(a). They have 3 stages of pooling. The feature map at each stage is extracted by convolutional blocks, which consist of $3\times3$ convolution, instance normalization, and ReLU. Also, there is skip connection for every stage.  The $128\times128$ patch images cropped from original image are used as inputs for the generator.
For the discriminators, the architecture of PatchGAN as shown in Fig.~\ref{fig:ct_net}(b) was
used. The network gets $128\times128$ patch image as an input, and decides whether they are real or generated from the generator. The discriminator consists of $4\times4$ convolution, Instance normalization and ReLU operation.
In this experiment, we use both WGAN-GP to impose 1-Lipschitz constraint for the discriminators.
Adam optimizer with the batch size of 20 was used for training. 

\begin{figure}[!h]
	\centerline{\includegraphics[width=0.7\textwidth]{./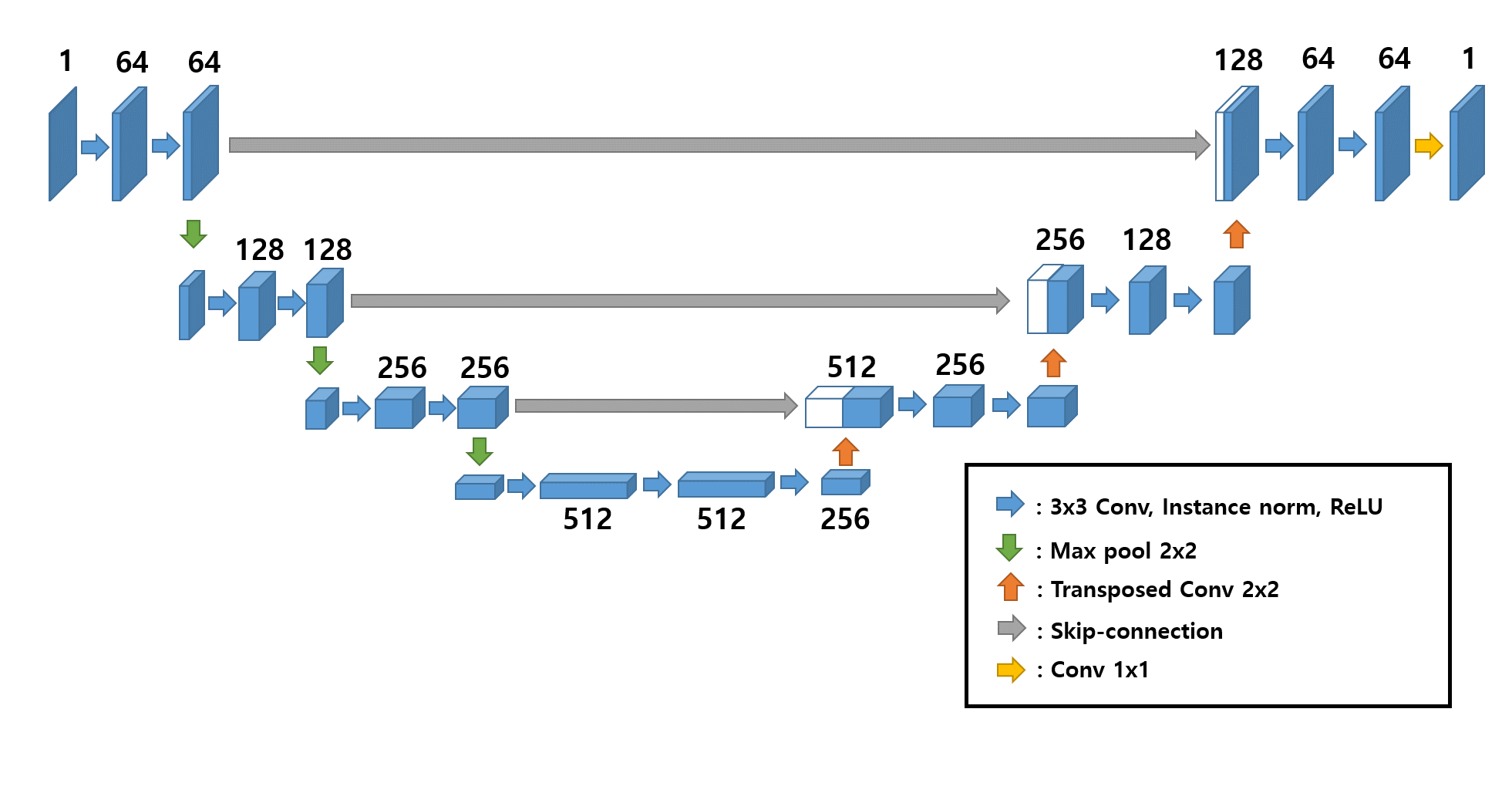}}
	\centerline{\mbox{(a)}}  
	\centerline{\includegraphics[width=0.4\textwidth]{./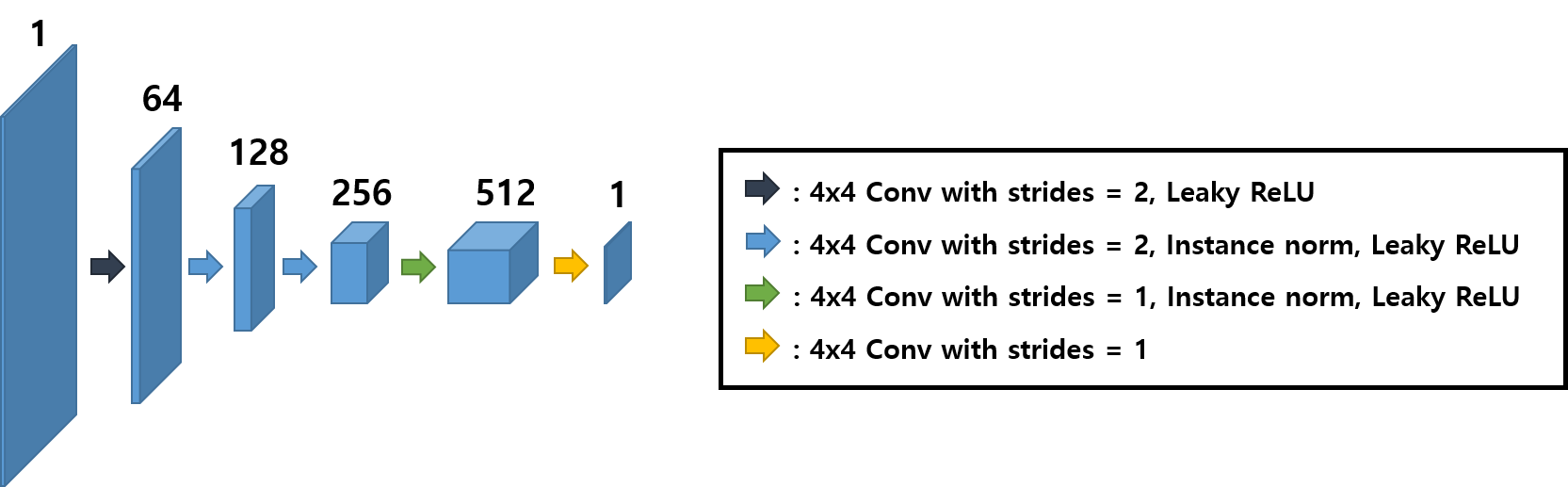}}
	\centerline{\mbox{(b)}}  
	\caption{Proposed network architectures for (a) generator and (b) discriminator for CT denoising.}
	\label{fig:ct_net}
\end{figure}

The abdominal CT dataset provided in Low Dose CT Grand Challenge of American Association of Physicists in Medicine (AAPM) was used for the experiments.   Total 3000 slices LDCT and SDCT data are used for training,  and 500 slices for validation. Another 421 slices of LDCT and SDCT data  are chosen to test the model.
The dataset is paired, but we do not utilize paired information during training. The paired information is only used to evaluate metric during test.
All models are trained for 400 epoch, and the output metric from each model is evaluated.
In training procedure for the cycleGAN, LDCT and SDCT data sets are  unpaired. For the supervised learning,
the same generator architecture was used and the paired training data are used.
In the evaluation procedure, the ground-truth SDCT data are used to obtain PSNR and SSIM to quantify the performance.

\begin{figure}[!h]
	\center{
		\includegraphics[width=0.6\textwidth]{./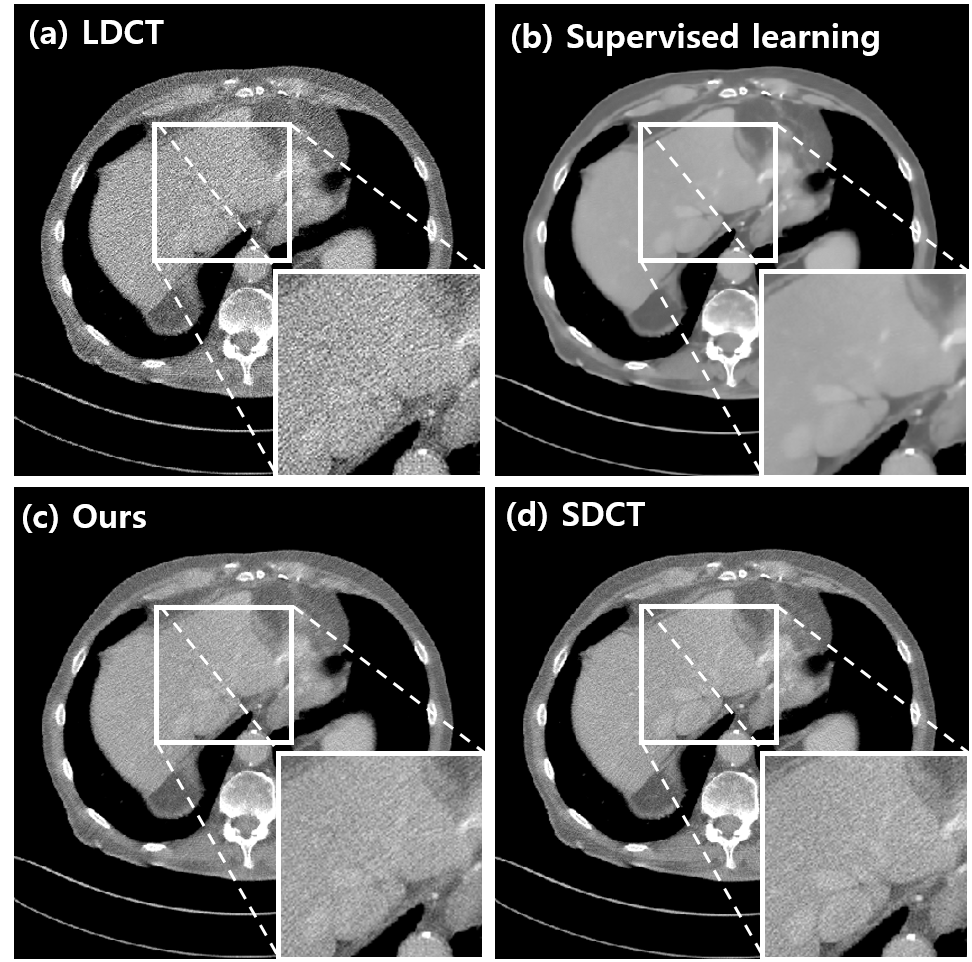}
	}
	\caption{Qualitative comparison of denoising performance. LDCT image, and denoising results
	by supervised learning, our OT-cycleGAN, and SDCT image. CT images are displayed with (-1000, 400)[HU] window.}
	\label{fig:ct_result}
\end{figure}

Table~\ref{tbl:ct_result} and Fig.\ref{fig:ct_result} demonstrates that the denoising was successfully achieved by the proposed OT-cycleGAN,
which is equivalent to the standard cycleGAN in this case. Also, it is remarkable that the performance of the cycleGAN is comparable with that of supervised learning, even though any paired data was used for training. 

\begin{table}[!thb]
	\centering
	\caption{Average quantitative performance comparison using 421 test sets of CT data}
	\resizebox{0.6\textwidth}{!}{
		\begin{tabular}{c | c c c c}
			\hline
			\ {Metric}         & Input       & Supervised Learning   & Proposed          \\ \hline\hline
			\ PSNR (dB)  (\ref{eq:psnr})          & 31.30      & 38.66              & 38.20                  			    \\
			\ SSIM     (\ref{eq:ssim})           & 0.81   & 0.93              & 0.92               					\\ \hline
		\end{tabular}
	}
	\label{tbl:ct_result}
\end{table}

\section{Conclusions}

In this paper, we presented a novel OT-cycleGAN framework that can be used for various inverse problems.
Specifically, the proposed OT-cycleGAN was obtained from Kantorovich dual OT problems, where  
a novel PLS cost with the data consistency and deep learning prior is used as a transportation cost.
As proofs of concept, we designed  three distinct OT-cycleGAN architectures for accelerated MRI, super-resolution
microscopy,  low-dose CT reconstruction problems, providing accurate reconstruction
results without any matched reference data.
Given the generality of our design principle, we believe that our method can be an important platform for unsupervised
learning for inverse problems.

\medskip

\bibliographystyle{siamplain}
\bibliography{refs,submit_bib,strings}

\end{document}